\begin{document}
\title{Are High-Degree Representations Really Unnecessary in Equivariant Graph Neural Networks?}

\author{
    Jiacheng Cen$^{1\,2}$, Anyi Li$^{1\,2}$, Ning Lin$^{1\,2}$, Yuxiang Ren$^{3}$, Zihe Wang$^{1\,2}$, Wenbing Huang$^{1\,2}$\thanks{Wenbing Huang is the corresponding author.}\\
      $^1$ Gaoling School of Artificial Intelligence, Renmin University of China \\
      $^2$ Beijing Key Laboratory of Big Data Management and Analysis Methods \\
      $^3$ 2012 Laboratories, Huawei Technologies, Shanghai\\
      \texttt{\{jiacc.cn,\,li\_anyi,\,ninglin00\}@outlook.com;}
      \texttt{renyuxiang1@huawei.com;}\\
      \texttt{wang.zihe@ruc.edu.cn;}
      \texttt{hwenbing@126.com}
}

\maketitle

\begin{abstract}
Equivariant Graph Neural Networks (GNNs) that incorporate the E(3) symmetry have achieved significant success in various scientific applications. As one of the most successful models, EGNN~\cite{satorras2021en} leverages a simple scalarization technique to perform equivariant message passing over only Cartesian vectors (i.e., 1st-degree steerable vectors), enjoying greater efficiency and efficacy compared to equivariant GNNs using higher-degree steerable vectors. This success suggests that higher-degree representations might be unnecessary. In this paper, we disprove this hypothesis by exploring the expressivity of equivariant GNNs on symmetric structures, including $k$-fold rotations and regular polyhedra. We theoretically demonstrate that equivariant GNNs will always degenerate to a zero function if the degree of the output representations is fixed to 1 or other specific values. Based on this theoretical insight, we propose HEGNN, a high-degree version of EGNN to increase the expressivity by incorporating high-degree steerable vectors while still maintaining EGNN's advantage through the scalarization trick. Our extensive experiments demonstrate that HEGNN not only aligns with our theoretical analyses on a toy dataset consisting of symmetric structures, but also shows substantial improvements on other complicated datasets without obvious symmetry, including $N$-body and MD17. Our study potentially showcase an effective way of modeling high-degree representations in equivariant GNNs.

\end{abstract}

\section{Introduction}
\label{sec:intro}

Molecules, proteins, crystals, and many other scientific data can be effectively modeled and represented through \emph{geometric graphs}~\cite{bronstein2021geometric,jiao2024equivariant,wu4909414better,frank2022so3krates,wang2024enhancing,chen2023subequivariant,chen2024subequivariant}. This type of data structure encapsulates not only node characteristics and edge information but also a 3D vector (such as position, velocity, etc.) for each node. To process geometric graphs, equivariant Graph Neural Networks (GNNs) have been developed, which undergo equivariant message passing over nodes, conforming to the $\mathrm{E}(3)$ or $\mathrm{SE}(3)$ symmetry of physical laws. These models have achieved remarkable successes in a lot of scientific tasks, such as physical dynamics simulation~\cite{han2022learning,wu2024equivariant,han2024geometric}, molecular generation~\cite{song2023unified,xu2023geometric,qu2024molcraft,xu2022geodiff} and protein design~\cite{watson2023novo,ingraham2023illuminating,yu2024rigid}.

Pioneer equivariant GNNs~\cite{thomas2018tensor,fuchs2020se,batzner20223,musaelian2023learning} derive high-degree steerable representations beyond scalars and 3D coordinates with the help of spherical harmonics and conduct equivariant message passing between representations of different degrees through the Clebsch-Gordan (CG) tensor product. While these high-degree models are able to approximate any function of fully connected geometric graphs in theory~\cite{dym2020universality}, they usually suffer from expensive computational costs in practice. In contrast, EGNN~\cite{satorras2021en} leverages a simple scalarization technique to allow equivariant message passing over only 3D vectors (\emph{i.e.} the 1st-degree steerable features). Specifically, the scalarization technique first encodes 3D vectors into scalars as invariant messages, which are passed as geometric messages after the multiplication with the 3D vectors to recover the orientation information. Despite its simplicity, EGNN achieves remarkably better efficacy and efficiency against conventional high-degree models for a broad range of applications~\cite{hoogeboom2022equivariant,xu2024equivariant}. Such successes suggest that higher-degree representations might be unnecessary.

\begin{figure}[t!]
    \centering
    \includegraphics[width=\textwidth, bb = 0mm 0mm 137mm 25mm]{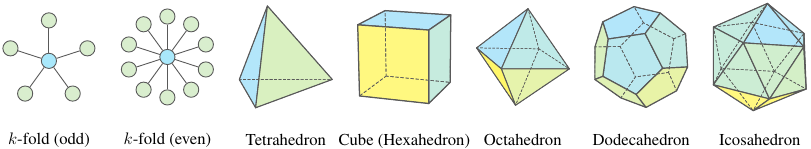}
    \vspace{-0.2in}
    \caption{Common symmetric graphs. Equivariant GNNs on symmetric graphs will degenerate to a zero function if the degree of their representations is fixed as 1.}
    \vskip -0.1in
    \label{fig:title}
\end{figure}

In this paper, we challenge and disprove this hypothesis by exploring the expressivity of equivariant GNNs on symmetric graphs. \cref{fig:title} illustrates the examples of $k$-fold rotations and regular polyhedra, which are invariant to rotations up to certain rotating angles. Taking the cube for example, conducting $90^{\circ}$ rotation around the axes crossing the center of the two opposite faces keeps its shape and orientation unchanged. Interestingly, by making use of group theory, we theoretically prove that any equivariant GNN (after translating the coordinate center to the origin and conducting graph-level readout) on these symmetric graphs will degenerate to a zero function if the degree of their representations is fixed to be 1. The direct deduction of this theorem is that EGNN can only output a zero 3D vector no matter how we rotate the input graph, indicating that EGNN totally loses the recognition ability of orientation. Additionally, this statement points out the limitation of the methods that rely on constructing global features for symmetric graphs
\footnote{See \cref{sec:app_further} for further discussion.} (\emph{e.g.} frames in frame averaging~\cite{puny2021frame,duval2023faenet}, virtual nodes in FastEGNN~\cite{zhang2024improving}), equivariant pooling in EGHN~\cite{han2022equivariant}, and meshes in Neural P$^3$M~\cite{wang2024neural}), since it is impossible to output another non-collinear 3D vector except the center coordinate.

Based on the above theoretical insights,  we propose a novel equivariant GNN model termed HEGNN\footnote{Code is available at \url{https://github.com/GLAD-RUC/HEGNN}.}, which enhances EGNN by incorporating high-degree steerable vectors while inheriting the desired benefit from EGNN through the scalarization trick. In summary, our contributions are as follows:
\begin{itemize}
    \item We theoretically investigate the expressivity reduction issue of equivariant GNNs on symmetric graphs. 
    \item We propose HEGNN, to further incorporate high-degree steerable representations into EGNN. Moreover, since the equivariant message passing process between different-degree representations is conducted via inner products, it shares the same benefit as EGNN, compared to traditional high-degree models.
    \item Our extensive experiments demonstrate that HEGNN not only aligns with our theoretical analyses on toy datasets consisting of symmetric graphs, but also shows substantial improvements on more complicated datasets without explicit symmetry, such as $N$-body and MD17. 
\end{itemize}

\section{Related Works}

\textbf{Equivariant GNNs.}
Equivariant GNNs can be divided into two classes: scalarization-based models and high-degree steerable models~\cite{han2024survey}. Scalarization-based models adopt norms or inner products to convert equivariant 3D vectors into invariant scalars, which are considered as coefficients to linearly combine 3D vectors for node update. EGNN~\citep{satorras2021en} is the first work falling into this category. Concurrently, PAINN~\citep{schutt2021equivariant} further enhances the expressive ability of the model by introducing multi-channel equivariant features. On the contrary, high-degree steerable models (\emph{e.g.} TFN~\citep{thomas2018tensor}, SEGNN~\citep{brandstetter2021geometric} and SE(3)-Transformer) use spherical harmonics to ensure the equivariance of message passing, and realize interaction between steerable features of different degrees through CG tensor products. Our HEGNN also uses high-degree steerable features, but it leverages the scalarization trick for the interaction between steerable features of different degrees, thus leading to more expressivity than EGNN and less computational cost than other high-degree models. 

\textbf{Expressivity of Equivariant GNNs.} 
The theoretical expressivity of equivariant GNNs is initially explored by~\cite{dym2020universality}, which proves the university of the high-degree steerable model, \emph{i.e.},  TFN~\cite{thomas2018tensor}, over fully-connected geometric graphs. GemNet~\cite{klicpera2021gemnet} further demonstrates that the universality holds with just spherical representations other than the full $\mathrm{SO}(3)$ representations that are required in the proof  of~\cite{dym2020universality}. More recently, the GWL framework~\cite{joshi2023expressive} extends the Weisfeiler-Lehman (WL) test into a geometric version~\cite{weisfeiler1968reduction} to study the expressive power of geometric GNNs operating on sparse graphs from the perspective of discriminating geometric graphs. Different from all the above works, our paper investigates the expressivity of equivariant GNNs on symmetric graphs, and demonstrates the necessity of involving high-degree representations. Although the GWL test paper~\cite{joshi2023expressive} has experimentally compared different models on $k$-fold structures that are allowed to rotate only in the 2D space, the conclusions of this paper are proved both theoretically and experimentally. Moreover, our discussions cover a full range of examples including $k$-folds (rotation in 3D space) and regular polyhedra.

\section{Theoretical Analyses}
\label{sec:theory}
In this section, we first present the necessary preliminaries related to geometric graphs and group representation. Then, we define and illustrate typical examples of symmetric graphs. Finally, we will discuss when equivariant GNNs will degenerate to a zero function on symmetric graphs. 

\subsection{Preliminaries}

\textbf{Geometric graph.} 
A geometric graph of $N$ nodes is defined as $\gG\coloneqq\left(\mH,\vec{\mX};\mA\right)$, where $\mH\coloneqq\{\vh_i\in\sR^{C_H}\}_{i=1}^N$ and $\vec{\mX}\coloneqq\{\vec\vx_i\in\sR^{3}\}_{i=1}^N$ are node features and 3D coordinates, respectively; $\mA\in\R^{N\times N}$ represents the adjacency matrix and can be assigned with edge features $\ve_{ij}$ if necessary. Throughout our theoretical analyses in this section, we assume the node features and edge features to be identical for all.

\textbf{Transformation of geometric graph.}
We are interested in the transformations of a geometric graph $\gG$ with respect to a group $\mathfrak{G}$, which is defined as $\mathfrak{g}\cdot\gG$, for $\mathfrak{g}\in \mathfrak{G}$ and $\cdot$ denoting the group action. For instance, $\mathfrak{g}\cdot\gG$ can be explained as translation, rotation, or reflection of the coordinates $\vec\mX$. These transformations form a 3D Euclidean group denoted as $\mathrm{E}(3)$, and its subgroup without translation is called the orthogonality group $\mathrm{O}(3)$. With the aid of group representation $\rho(\mathfrak{g})$, the transformation of a coordinate $\vec\vx$ is represented as $\rho(\mathfrak{g})\vec\vx$. For example,  orthogonal matrices are the trivial representations of $\mathrm{O}(3)$, that is, the orthogonal transformation of a vector $\vec{\vx}$ is represented by $\mO\vec{\vx}$ with $\mO\in\R^{3\times 3}$ being an orthogonal matrix. Besides, there are other representations of $\mathrm{O}(3)$, such as the irreducible representations which will be detailed below. 

\textbf{Equivariance.} 
Let $\gX$ and $\gY$ be the input and output vector spaces, respectively. A function $f: \gX\to\gY$ is called \emph{equivariant} with respect to group $\mathfrak{G}$ if
\begin{equation}\label{eq:equ_with_rep}
\forall \mathfrak{g}\in \mathfrak{G}, f (\rho_{\gX}(\mathfrak{g}) \vec\vx) = \rho_{\gY}(\mathfrak{g})f(\vec\vx),
\end{equation}
where $\rho_{\gX}$ and $\rho_{\gY}$ are the group representations in the input and output spaces, respectively. Since we can eliminate the translation effect by simply translating the center of all coordinates to the origin, we only discuss equivariance with respect to $\mathrm{O}(3)$ in this section. In other words, we default that the center of $\vec{\mX}$ is at the origin.

\textbf{Irreducible representations and steerable features.} $\mathrm{O}(3)$ consists of rotation and inversion, implying $\mathrm{O}(3)=\mathrm{SO}(3)\times C_i$, where $\mathrm{SO}(3)$ is the rotation group and $C_i=\{\mathfrak{e}, \mathfrak{i}\}$ denotes the inverse group. We first discuss the irreducible representations of $\mathrm{SO}(3)$. For each rotation $\mathfrak{r}\in\mathrm{SO}(3)$, its irreducible representations are Wigner-D matrices $\mD^{(l)}(\mathfrak{r})\in\R^{(2l+1)\times(2l+1)}$ of different degree $l\in\mathbb{N}$~\cite{batzner20223,batatia2022mace}. When $l=1$, it becomes the common rotation matrix $\mR_\mathfrak{r}$ acting on the 3D coordinate space. Under the irreducible representations, the equivariant constraint in~\cref{eq:equ_with_rep} turns into $f^{(l)}(\mR_{\mathfrak{r}}\vec\vx)=\mD^{(l)}(\mathfrak{r})f^{(l)}(\vec\vx)$, if the output degree is $l$. According to~\citep{weiler20183d}, spherical harmonics $Y^{(l)}=[Y_m^{(l)}(\vec\vx)]_{m=-l}^{l}$ offer a \emph{unique} and \emph{complete} set of function bases satisfying the equivariant constraint.  We further define a modulated spherical harmonics as $f^{(l)}(\vec\vx)=\varphi(\|\vec\vx\|)\cdot Y^{(l)}(\vec\vx/\|\vec\vx\|)$ by adding a continuous radial function $\varphi:\sR^+\to\sR$ of vector norm $\|\cdot\|$ for re-scaling. Such a function $f^{l}$  and its output $f^{l}(\vec\vx)$ are called type-$l$ \emph{steerable function} and \emph{steerable feature}, respectively. We now deduce the irreducible representations from $\mathrm{SO}(3)$ to $\mathrm{O}(3)$. Note that spherical harmonics satisfy $Y^{(l)}(-\vec\vx)=(-1)^{l}Y^{(l)}(\vec\vx)$; in other words, they are inverse-equivariant when $l$ is odd, but inverse-invariant when $l$ is even. We thus specify the group representation of $\mathrm{O}(3)$ as 
\begin{align}
\label{eq:rho}
\rho^{(l)}(\mathfrak{rm})\coloneqq \sigma^{(l)}(\mathfrak{m})\mD^{(l)}(\mathfrak{r}),    
\end{align}
where $\sigma^{(l)}(\mathfrak{m})=1$ for $\mathfrak{m}=\mathfrak{e}$ (the identity) and $\sigma^{(l)}(\mathfrak{m})=(-1)^l$ if $\mathfrak{m}=\mathfrak{i}$ (the inverse). Readers can refer to the discussion in e3nn~\citep{geiger2022e3nn} with another representation method by using the concept of \emph{parity} and construct this through methods such as Clebsch-Gordan (CG) tensor product~\cite{griffiths2018introduction}. For concision, the type-$l$ steerable feature is denoted as $\tilde\vv^{(l)}$ with a tilde notation. 

\subsection{Symmetric Graph}
In \cref{sec:intro}, we present that $k$-fold rotations and regular polyhedra exhibit certain symmetries. In this subsection, we formally describe them via the notion of the symmetric graph. 

\begin{definition}[Symmetric Graph]
\label{def:symmetric graph}
     A geometric graph $\gG$ is called a symmetric graph, if there exists a finite and nontrivial subgroup $\mathfrak{H}\leq \mathrm{O}(3), \mathfrak{H}\neq\{\mathfrak{e}\}$, satisfying that  $\forall \mathfrak{h}\in\mathfrak{H}, \mathfrak{h}\cdot\gG = \gG$. All subgroups making $\gG$ symmetric yields a set $\sH(\gG)$, and all geometric graphs that are symmetric \emph{w.r.t.} $\mathfrak{H}$ constitute a set denoted as $\sG(\mathfrak{H})$.
\end{definition}
Here $\mathfrak{h}\cdot \gG=\gG$ is defined in the graph level. Particularly for the coordinates $\vec\mX\in\sR^{3\times N}$, it implies that $\forall\mO\in\mathfrak{H}$, $\exists \mP\in S_N$, $\mO\vec\mX=\vec\mX\mP$ and $\mP \mA = \mA \mP$, where $S_N$ is the permutation group of order $N$. Essentially, rotating the coordinates of a symmetric graph leads to a copy of this graph up to a different permutation of the nodes. 

Without considering inversion, the finite subgroups of $\mathrm{SO}(3)$ are only cyclic group $C_n$, dihedral group $D_n$, tetrahedral group $T$, octahedral group $O$, and Icosahedral group $I$~\cite{landau2013quantum}. We provide several examples of symmetric graphs as follows. 
\begin{example}[$k$-folds] On the one hand, for a geometric graph $\gG$ corresponding to a $2k$-fold with nodes $\{(\cos(i\cdot\pi/k),\sin(i\cdot\pi/k), 0)\}_{i=0}^{2k-1}$, the inverse group $C_i$ and the dihedral group $D_{2k}$ (rotation around $z$-axis with angle $\pi/k$, and reflection around the axis connecting the midpoints of opposite sides or the axis connecting opposite vertices), are symmetric groups on $\gG$, namely, $C_i, D_{2k}\in\sH(\gG)$. On the other hand, for a geometric graph $\gG$ corresponding to a $(2k+1)$-fold with nodes $\{(\cos(i\cdot2\pi/(2k+1),\sin(i\cdot2\pi/(2k+1)), 0)\}_{i=0}^{2k}$, $\sH(\gG)$ includes the dihedral group $D_{2k+1}$ but without the inverse group $C_i$.
\end{example}

\begin{example} [Regular Polygons]
   The symmetric groups of regular polygons in the plane and regular prisms in space include the dihedral group $D_n$. Regular tetrahedra are symmetric with respect to three rotation axes of the second order and four axes of the third order, corresponding to 12 group elements. Regular hexahedra (cubes) and the regular octahedra (which are dual to each other and share the same symmetric groups) are symmetric about six axes of the second order, four axes of the third order, and three axes of the fourth order, corresponding to 24 group elements. Regular dodecahedra and regular icosahedra (which are also dual to each other) are symmetric about six axes of the fifth order, ten axes of the third order, and fifteen axes of the second order, corresponding to 60 group elements. Additionally, except tetrahedra, all other four regular polygons are central symmetric, indicating that $C_i$ is their common symmetric group. 
\end{example}

\subsection{Equivariant GNNs on symmetric graphs}
We now demonstrate that equivariant GNNs on symmetric graphs will encounter the issue of expressivity degeneration. Here, we assume that the graph functions we explore are invariant to the permutation of the nodes. This fits the case when we add a readout layer to all nodes globally or just focus on the message passing process for each node individually.

We first derive a crucial theorem that greatly facilitates our analyses. 
\begin{theorem}
\label{theo:group-average}
    Suppose that $f^{(l)}$ is an $\mathrm{O}(3)$-equivariant function on geometric graphs, regarding the group representation $\rho^{(l)}$ defined in~\cref{eq:rho}. Then, for any symmetric graph $\gG$ induced by the group $\mathfrak{H}\leq\mathrm{O}(3)$, namely, $\forall\gG\in\sG(\mathfrak{H})$, we always have 
    \begin{equation}\label{eq:group_avg_eq}
        f^{(l)}(\gG)=\rho^{(l)}(\mathfrak{H})f^{(l)}(\gG).
    \end{equation}
    Here we have defined group average as $\rho^{(l)}(\mathfrak{H})\coloneqq\frac{1}{|\mathfrak{H}|}\sum_{\mathfrak{h}\in \mathfrak{H}}\rho^{(l)}(\mathfrak{h})$.
\end{theorem}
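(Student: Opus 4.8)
The plan is to reduce the averaged identity to a single per-element identity and then average. Concretely, I would first show that $f^{(l)}(\gG)=\rho^{(l)}(\mathfrak{h})f^{(l)}(\gG)$ holds for \emph{every} individual $\mathfrak{h}\in\mathfrak{H}$; once this is in hand, summing over $\mathfrak{h}$ and dividing by $|\mathfrak{H}|$ yields \cref{eq:group_avg_eq} immediately, since the left-hand side is constant in $\mathfrak{h}$ while the right-hand side averages into $\rho^{(l)}(\mathfrak{H})f^{(l)}(\gG)$.

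To establish the per-element identity, fix $\gG\in\sG(\mathfrak{H})$ and $\mathfrak{h}\in\mathfrak{H}$, and evaluate $f^{(l)}(\mathfrak{h}\cdot\gG)$ in two ways. On one hand, $\mathrm{O}(3)$-equivariance of $f^{(l)}$ with respect to $\rho^{(l)}$ gives $f^{(l)}(\mathfrak{h}\cdot\gG)=\rho^{(l)}(\mathfrak{h})f^{(l)}(\gG)$. On the other hand, the defining symmetry $\mathfrak{h}\cdot\gG=\gG$ from \cref{def:symmetric graph} means that the orthogonally transformed coordinates $\mO\vec\mX$ coincide with $\vec\mX\mP$ for some $\mP\in S_N$ satisfying $\mP\mA=\mA\mP$; that is, $\mathfrak{h}\cdot\gG$ is literally the same geometric graph as $\gG$ up to a relabeling of its nodes. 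Since the node and edge features are assumed identical for all nodes and since we work with permutation-invariant graph functions (the readout or node-wise setting), this relabeling does not affect the output, so $f^{(l)}(\mathfrak{h}\cdot\gG)=f^{(l)}(\gG)$. Equating the two evaluations gives $f^{(l)}(\gG)=\rho^{(l)}(\mathfrak{h})f^{(l)}(\gG)$.

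Finally, since $\mathfrak{H}$ is finite by \cref{def:symmetric graph}, the average is well-defined, and
\begin{equation}
f^{(l)}(\gG)=\frac{1}{|\mathfrak{H}|}\sum_{\mathfrak{h}\in\mathfrak{H}}f^{(l)}(\gG)=\frac{1}{|\mathfrak{H}|}\sum_{\mathfrak{h}\in\mathfrak{H}}\rho^{(l)}(\mathfrak{h})f^{(l)}(\gG)=\rho^{(l)}(\mathfrak{H})f^{(l)}(\gG),
\end{equation}
which is the claimed identity.

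I expect the main obstacle to be conceptual rather than computational: carefully disentangling the two distinct symmetries that the graph-level action $\mathfrak{h}\cdot\gG$ bundles together. The orthogonal part acts on the coordinates and is handled by \emph{equivariance}, producing the factor $\rho^{(l)}(\mathfrak{h})$, whereas the induced node relabeling $\mP$ must be absorbed by \emph{permutation invariance}. The argument only goes through because the identical-features assumption and the permutation-invariance assumption together certify that $\mathfrak{h}\cdot\gG$ and $\gG$ are indistinguishable inputs to $f^{(l)}$; making this step rigorous — rather than silently conflating the rotated graph with the original — is where the care is needed.
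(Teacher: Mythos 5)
Your proof is correct and follows essentially the same route as the paper's: both combine the graph-level symmetry $\mathfrak{h}\cdot\gG=\gG$ with $\mathrm{O}(3)$-equivariance and then average over the finite group $\mathfrak{H}$, the only difference being that you isolate the per-element identity $f^{(l)}(\gG)=\rho^{(l)}(\mathfrak{h})f^{(l)}(\gG)$ before averaging, whereas the paper averages in a single chain of equalities. Your explicit remark that the induced node permutation $\mP$ is absorbed by the permutation-invariance assumption is a point the paper leaves implicit, and is a welcome clarification rather than a deviation.
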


\cref{eq:group_avg_eq} is interesting and it shows that the function $f^{(l)}$ is symmetric with respect to the group average $\rho^{(l)}(\mathfrak{H})$. More importantly, it indicates an linear equation $\left(\mI_{2l+1}-\rho^{(l)}(\mathfrak{H})\right)f^{(l)}(\gG)=0$, where $\mI_{2l+1}\in\R^{(2l+1)\times(2l+1)}$ is the identity matrix. We can immediately attain the following conclusion. 
\begin{theorem}
\label{theo:zero-function}
    If and only if the matrix $\mI_{2l+1}-\rho^{(l)}(\mathfrak{H})$ is non-singular, the $\mathrm{O}(3)$-equivariant function $f^{(l)}$ is always a zero function on $\gG$, namely,
    \begin{equation}
        f^{(l)}(\gG)\equiv \bm{0}, \quad \forall \gG\in\sG(\mathfrak{H}).
    \end{equation}
\end{theorem}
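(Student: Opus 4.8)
The plan is to extract both directions of the equivalence directly from the linear constraint handed to us by \cref{theo:group-average}. That theorem gives, for every $\gG\in\sG(\mathfrak{H})$ and every $\mathrm{O}(3)$-equivariant $f^{(l)}$, the homogeneous system $\left(\mI_{2l+1}-\rho^{(l)}(\mathfrak{H})\right)f^{(l)}(\gG)=\bm{0}$, so that the value $f^{(l)}(\gG)$ is pinned to the null space of $\mI_{2l+1}-\rho^{(l)}(\mathfrak{H})$. The whole statement therefore reduces to understanding this null space.

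For the sufficiency direction (non-singular $\Rightarrow$ zero) I would simply left-multiply the constraint by the inverse $\left(\mI_{2l+1}-\rho^{(l)}(\mathfrak{H})\right)^{-1}$, which exists by assumption, forcing $f^{(l)}(\gG)=\bm{0}$. Since the constraint holds for every $\gG\in\sG(\mathfrak{H})$ and every equivariant $f^{(l)}$, the function vanishes identically on the whole class. This is the implication used in the sequel, and it is immediate.

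For the necessity direction I would argue the contrapositive: assuming $\mI_{2l+1}-\rho^{(l)}(\mathfrak{H})$ is singular, I must produce a single equivariant $f^{(l)}$ and a graph $\gG\in\sG(\mathfrak{H})$ with $f^{(l)}(\gG)\neq\bm{0}$. The first step is to note that the group average $\rho^{(l)}(\mathfrak{H})$ is idempotent and hence is the orthogonal projector onto the $\mathfrak{H}$-fixed subspace $V^{\mathfrak{H}}\coloneqq\{\vv:\rho^{(l)}(\mathfrak{h})\vv=\vv,\ \forall\mathfrak{h}\in\mathfrak{H}\}$; this identifies $\ker\!\left(\mI_{2l+1}-\rho^{(l)}(\mathfrak{H})\right)$ with $V^{\mathfrak{H}}$, so singularity is exactly the statement $V^{\mathfrak{H}}\neq\{\bm{0}\}$. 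The second step is an explicit witness: pick a generic point $\vec p$ (with trivial $\mathfrak{H}$-stabilizer) and let $\gG$ be the symmetric graph whose nodes are its orbit $\{\mO\vec p:\mO\in\mathfrak{H}\}$ with identical node/edge features, which is $\mathfrak{H}$-invariant by construction. On this graph the permutation-invariant, equivariant steerable map $f^{(l)}(\gG)=\sum_{\mO\in\mathfrak{H}}\varphi(\|\vec p\|)\,Y^{(l)}(\mO\vec p/\|\vec p\|)$ evaluates, via the equivariance $Y^{(l)}(\mO\vec p/\|\vec p\|)=\rho^{(l)}(\mathfrak{h})Y^{(l)}(\vec p/\|\vec p\|)$, to $|\mathfrak{H}|\,\varphi(\|\vec p\|)\,\rho^{(l)}(\mathfrak{H})Y^{(l)}(\vec p/\|\vec p\|)$.

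The main obstacle is closing this necessity direction by guaranteeing the witness is genuinely nonzero rather than cancelling to $\bm{0}$ over the orbit. This is where the completeness of the spherical harmonics from the preliminaries is decisive: the vectors $\{Y^{(l)}(\vec p/\|\vec p\|)\}$ ranging over all directions span $\R^{2l+1}$, so if $\rho^{(l)}(\mathfrak{H})Y^{(l)}(\vec p/\|\vec p\|)$ were zero for every $\vec p$ then $\rho^{(l)}(\mathfrak{H})$ would be the zero operator, contradicting $V^{\mathfrak{H}}\neq\{\bm{0}\}$. Hence some direction $\vec p$ yields $f^{(l)}(\gG)\neq\bm{0}$, completing the contrapositive. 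Everything else — the projector identification and the sufficiency direction — is routine linear algebra once \cref{theo:group-average} is in hand.
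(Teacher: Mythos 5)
Your proposal is correct, and it is genuinely more complete than the paper's own proof. The paper derives the same constraint from \cref{theo:group-average}, observes $f^{(l)}(\gG)=\rho^{(l)}(\mathfrak{H})f^{(l)}(\gG)\iff\left(\mI_{2l+1}-\rho^{(l)}(\mathfrak{H})\right)f^{(l)}(\gG)=0$, and then closes with ``the theorem holds for basic knowledge of linear algebra'' --- which really only covers your sufficiency direction (and the general null-space remark made after the theorem in the main text). The ``only if'' half, which requires exhibiting an equivariant function and a symmetric graph realizing a nonzero value whenever $\mI_{2l+1}-\rho^{(l)}(\mathfrak{H})$ is singular, is never constructed in the paper; the closest the paper comes is the \emph{numerical} check in \cref{tab:sph_sum} that sums of spherical harmonics are nonvanishing at the non-degenerate degrees, plus an explicit caveat in \cref{sec:app_further} that non-degenerate degrees ``not necessarily produce a non-zero representation.'' Your argument supplies exactly this missing half analytically: the idempotence of the group average (the same $\mathfrak{h}\cdot\mathfrak{H}=\mathfrak{H}$ fact the paper uses to prove \cref{theo:zero-trace}) identifies $\ker\left(\mI_{2l+1}-\rho^{(l)}(\mathfrak{H})\right)$ with the fixed subspace, and the spanning property of $\{Y^{(l)}(\hat p)\}$ over directions guarantees some orbit witness is nonzero. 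Incidentally, the trivial-stabilizer genericity you impose on $\vec p$ is unnecessary: a nontrivial stabilizer only rescales the orbit sum by $|\mathfrak{H}|/|\mathrm{Stab}(\vec p)|$, which is convenient since you would otherwise have to intersect two genericity conditions.

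One caveat worth tightening: the paper defaults all graphs to be centered at the origin, and your single-orbit witness can conflict with that convention precisely when the standard ($l=1$) representation of $\mathfrak{H}$ has a nonzero fixed vector, i.e.\ for cyclic groups $C_n$. Concretely, take $\mathfrak{H}=C_2$ about the $z$-axis and $l=1$: the matrix is singular (fixed space is the $z$-axis), yet every \emph{centered} single orbit gives zero --- generic orbits have centroid $(0,0,p_z)\neq\bm 0$ and are excluded, while equatorial orbits $\{\hat p,-\hat p\}$ cancel. Two repairs are available: either note that \cref{def:symmetric graph} and \cref{theo:group-average} nowhere require centering (the centroid of any symmetric graph is automatically $\mathfrak{H}$-fixed, so the uncentered orbit graph is a legitimate element of $\sG(\mathfrak{H})$ and your computation stands), or keep the centering convention and use a multi-shell witness along the fixed axis whose radial profile $\varphi$ prevents cancellation, e.g.\ nodes $(0,0,3),(0,0,-1),(0,0,-2)$ with $\varphi(3)\neq\varphi(1)+\varphi(2)$. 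With that one sentence added, your necessity direction is airtight, and it upgrades the paper's empirical verification into a proof.
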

A more general version of \cref{theo:zero-function} is that the output space of $f^{(l)}$ corresponds to the null space of the matrix $\mI_{2l+1}-\rho^{(l)}(\mathfrak{H})$, indicating that $\mathrm{dim}(f^{(l)})=(2l+1)-\mathrm{rank}\left(\mI_{2l+1}-\rho^{(l)}(\mathfrak{H})\right)$. Therefore, even the function $f^{(l)})$ will not exactly reduce to a zero function when $\mI_{2l+1}-\rho^{(l)}(\mathfrak{H})$ is singular, its output space is still limited to a subspace and suffers from diminished expressivity owing to the symmetry of the input geometric graph. 

In practice, it is difficult to determine if the matrix $\mI_{2l+1}-\rho^{(l)}(\mathfrak{H})$ is singular. This determination becomes easier if we can show that the group average $\rho^{(l)}(\mathfrak{H})$ is equal to the zero matrix. Fortunately, we have the following property.
\begin{theorem}
\label{theo:zero-trace}
    For a finite group $\mathfrak{H}$ with its representation $\rho^{(l)}$, $\rho^{(l)}(\mathfrak{H})$ is a zero matrix (\emph{i.e.}, $\rho^{(l)}(\mathfrak{H})=\bm{0}$) if and only if $\tr(\rho^{(l)}(\mathfrak{H}))=0$. In this case, $f^{(l)}(\gG)\equiv \bm{0}, \forall \gG\in\sG(\mathfrak{H})$.
\end{theorem}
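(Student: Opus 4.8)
The plan is to recognize the group average $P\coloneqq\rho^{(l)}(\mathfrak{H})=\frac{1}{|\mathfrak{H}|}\sum_{\mathfrak{h}\in\mathfrak{H}}\rho^{(l)}(\mathfrak{h})$ as an \emph{idempotent} (projection) matrix, and then to invoke the standard fact that the trace of an idempotent equals its rank. With that identification, the ``if and only if'' becomes almost immediate, since a rank is a non-negative integer that vanishes exactly when the matrix does.

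First I would verify $P^2=P$. Expanding the product and using that $\rho^{(l)}$ is a homomorphism gives $P^2=\frac{1}{|\mathfrak{H}|^2}\sum_{\mathfrak{g},\mathfrak{h}}\rho^{(l)}(\mathfrak{g})\rho^{(l)}(\mathfrak{h})=\frac{1}{|\mathfrak{H}|^2}\sum_{\mathfrak{g},\mathfrak{h}}\rho^{(l)}(\mathfrak{g}\mathfrak{h})$. For each fixed $\mathfrak{g}$, left multiplication $\mathfrak{h}\mapsto\mathfrak{g}\mathfrak{h}$ is a bijection of $\mathfrak{H}$ onto itself, so the inner sum equals $|\mathfrak{H}|\,P$; summing over $\mathfrak{g}$ and dividing yields $P^2=P$. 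Here the fact that $\mathfrak{H}$ is genuinely a group, closed under the group operation, is exactly what licenses the re-indexing.

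Next I would apply the linear-algebra lemma that any idempotent matrix over a field of characteristic zero is diagonalizable with eigenvalues lying in $\{0,1\}$, because its minimal polynomial divides $x(x-1)$, which has distinct roots. Consequently $\tr(P)$ counts the unit eigenvalues, so $\tr(P)=\mathrm{rank}(P)=\dim(\mathrm{Im}\,P)$, a non-negative integer. (One could alternatively note that $\rho^{(l)}$ is orthogonal, since the Wigner-D matrices are orthogonal and $\sigma^{(l)}(\mathfrak{m})=\pm1$, which forces the eigenvalues of $P$ into $[-1,1]$; but idempotency is the cleaner route and is the single key insight of the whole argument.) From this identity both implications follow: if $P=\bm{0}$ then trivially $\tr(P)=0$; conversely $\tr(P)=0$ forces $\mathrm{rank}(P)=0$, hence $P=\bm{0}$.

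Finally, to reach the degeneration conclusion I would observe that $P=\bm{0}$ makes $\mI_{2l+1}-\rho^{(l)}(\mathfrak{H})=\mI_{2l+1}$ trivially non-singular, so \cref{theo:zero-function} immediately gives $f^{(l)}(\gG)\equiv\bm{0}$ for all $\gG\in\sG(\mathfrak{H})$. I do not expect a serious obstacle in this proof: the only real subtlety is spotting that the group average is a projection, after which the trace-equals-rank identity makes the equivalence essentially automatic; the rest is routine bookkeeping and an appeal to the previously established \cref{theo:zero-function}.
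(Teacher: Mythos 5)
Your proof is correct, and it shares the paper's central insight---that the group average $P=\rho^{(l)}(\mathfrak{H})$ is idempotent, which the paper derives from the coset identity $\mathfrak{h}\cdot\mathfrak{H}=\mathfrak{H}$ and you derive by re-indexing the double sum (the same mechanism)---but the two arguments diverge after that point. The paper iterates idempotency to get $P^k=P$ for all $k\in\mathbb{N}_+$, takes traces to conclude that every power sum of the eigenvalues vanishes, and then invokes Newton's identities to deduce that all eigenvalues are zero, from which it asserts $P=\bm{0}$. You instead use the minimal-polynomial argument: since $P^2=P$, the minimal polynomial divides $x(x-1)$, so $P$ is diagonalizable with eigenvalues in $\{0,1\}$ and $\tr(P)=\mathrm{rank}(P)$, whence $\tr(P)=0$ forces $P=\bm{0}$ directly. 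Your route is shorter and, notably, more airtight at the final step: ``all eigenvalues zero implies the matrix is zero'' is false in general (any nonzero nilpotent matrix is a counterexample), so the paper's last inference silently relies on the idempotency it established earlier, whereas your diagonalizability argument makes this justification explicit. What the paper's Newton's-identities route buys is that it only needs vanishing traces of powers, a technique that generalizes to settings where one controls $\tr(P^k)$ but not an algebraic relation like $P^2=P$; what your route buys is economy and rigor, since trace-equals-rank for projections settles the equivalence in one line. Your closing step---$P=\bm{0}$ makes $\mI_{2l+1}-\rho^{(l)}(\mathfrak{H})=\mI_{2l+1}$ non-singular, so \cref{theo:zero-function} yields $f^{(l)}(\gG)\equiv\bm{0}$ for all $\gG\in\sG(\mathfrak{H})$---matches exactly how the paper's statement is meant to be discharged.
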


According to \cref{theo:zero-trace}, we calculate the trace of the group average for each symmetric graph of interest and check if the trace is equal to zero. We summarize the conclusions for $k$-fold structures and regular polyhedra in \cref{tab:fail ell}. We find that when $l=1$, $f^{(1)}\equiv \bm{0}$ for all cases. In addition, the function degenerates when $l$ is odd, if the symmetric graph is induced by the inverse group $C_i$. We defer more details of the calculations in the Appendix. Compared to the conclusions drawn by the GWL paper~\cite{joshi2023expressive} which only experimentally discusses the $k$-fold structures under 2D rotations, here we apply rigorous theoretical derivations to analyze more cases besides $k$-folds, regarding more symmetric subgroups of $\mathrm{O}(3)$.  

\begin{table}[H]
\centering
\vspace{-0.1in}
\caption{Expressivity degeneration of equivariant GNNs on symmetric graphs.}
\label{tab:fail ell}
 \begin{tabular}{lcl}
    \toprule
    Symmetric Graph $\gG$ & Symmetry Group $\mathfrak{H}\in\sH(\gG)$ & $l$ leading to $f^{(l)}(\gG)\equiv \bm{0}$\\
    \midrule
    $2k$-fold                & $C_i, D_{2k}$ & $l$ is odd\\
    $(2k+1)$-fold            & $D_{2k+1}$ & $l<2k+1$ and $l$ is odd\\
    Tetrahedron              & $T$ & $l\in\{1, 2, 5\}$\\
    Cube/Octahedron          & $C_i, O$ & $l=2$ or $l$ is odd\\
    Dodecahedron/Icosahedron & $C_i, I$ & $l\in\{2, 4, 8, 14\}$ or $l$ is odd\\
    \bottomrule
\end{tabular}
\vspace{-0.1in}
\end{table}

\section{The Proposed HEGNN}
\label{sec:model}
\begin{figure}
    \centering
    \includegraphics[width=\textwidth, bb=0mm 0mm 140mm 60mm]{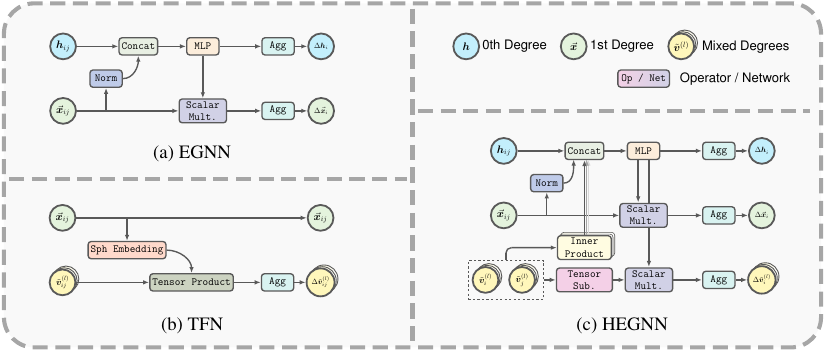}
    \vspace{-0.2in}
    \caption{The different architectures of our HEGNN, EGNN~\citep{satorras2021en} and TFN~\citep{thomas2018tensor}. HEGNN exploits the scalarization trick inspired by EGNN to enable steerable features to interact between different degrees, avoiding the high computational cost of using CG tensor products in TFN.}
    \label{fig:model}
    \vspace{-0.15in}
\end{figure}

The analyses in the last section imply the necessity of involving the representations with more and higher degrees in equivariant GNNs. Therefore, we propose HEGNN by further conducting the update of high-degree steerable features upon EGNN~\cite{satorras2021en}. As illustrated in \cref{fig:model}, HEGNN is composed of the three key components: initialization of high-degree steerable features, calculation of cross-degree invariant messages, and aggregation of neighbor messages, the latter two of which are conducted over multiple layers. We depict each component separately in what follows. 

\textbf{Initialization of high-degree steerable features.} 
Given a geometric graph $\gG\left(\mH,\vec{\mX};\mA\right)$ where each node contains only type-0 feature $\vh_i$ and type-1 feature $\vec\vx_i$, we first obtain the initialization of high-degree steerable features $\{\tilde{\vv}_i^{(l)}\}_{l=0}^L$ by using spherical harmonics on normalized relative coordinates. In detail, we aggregate spherical harmonics from all neighbors as
\begin{equation}\label{eq:init}
    \tilde\vv_{i, \texttt{init}}^{(l)}=\frac{1}{|\mathcal{N}(i)|}\sum_{j\in\mathcal{N}(i)}\varphi_{\tilde\vv,\texttt{init}}^{(l)}(\vm_{ij,\texttt{init}})\cdot Y^{(l)}\left(\frac{\vec\vx_i-\vec\vx_j}{\|\vec\vx_i-\vec\vx_j\|}\right),
\end{equation}
where $\vm_{ij,\texttt{init}}=\varphi_{\vm,\texttt{init}}\left(\vh_i,\vh_j, \ve_{ij}, d_{ij}^2\right)$ is an invariant scalar with $\varphi_{\vm,\texttt{init}}$ being an arbitrary Multi Layer Perceptron (MLP), and $\gN(i)$ denotes the neighbors of $i$\footnote{\cref{eq:init} is unable to derive pseudo-vectors such as torque or angular momentum, which are type-1 steerable features but invariant to reflection. To address this issue, we can further conduct CG tensor product between $\tilde\vv_{i, \texttt{init}}^{(l)}$ and $\vec\vx_i-\vec\vx_j$ to yield the steerable feature of desired symmetry.}.

\textbf{Calculation of cross-degree invariant messages.} 
EGNN~\cite{satorras2021en} employs a scalarization trick by transforming the relative coordinate $\vec\vx_i-\vec\vx_j$ (the usage of relative coordinates is for translation invariance) into an invariant scalar via the vector norm, which will be used to compute invariant message for both node features and coordinates. We generalize this scalarization trick to the case of high-degree steerable features. To be specific, we carry out the inner product between $\tilde{\vv}_i^{(l)}$ and $\tilde{\vv}_j^{(l)}$ for each degree $l$ individually, resulting in an invariant scalar $z_{ij}^{(l)}$. Then, we get the invariant message between node $i$ and $j$, namely $\vm_{ij}$ after undergoing an MLP of all invariant quantities. 
The above processes are summarized as follows:
\begin{equation}\label{eq:msg}
    d_{ij}=\|\vec\vx_i-\vec\vx_j\|, \quad 
    z_{ij}^{(l)}=\left\langle\tilde\vv_i^{(l)} ,\tilde\vv_j^{(l)}\right\rangle, \quad 
    \vm_{ij}=\varphi_{\vm}\left(\vh_i,\vh_j, \ve_{ij}, d_{ij}^2, \bigoplus_{l=0}^Lz_{ij}^{(l)}\right),
\end{equation}
where $\bigoplus$ refers to concatenation. It should be noted that the form of \texttt{SO3KRATES} introduced in the concurrent work~\cite{frank2024euclidean} is equivalent to the expression for $z_{ij}^{(l)}$ in \cref{eq:msg}. Furthermore, our scalarization trick simplifies the formulation by bypassing the Clebsch-Gordan coefficients, making it more straightforward and easier to understand.

\textbf{Aggregation of neighbor messages.}
With the invariant message $\vm_{ij}$ at hand, we then update $\vh_i,\vec\vx_i, \tilde\vv_i^{(l)}$ via message aggregation over all neighbors. We define $\Delta\vh_i,\Delta\vec\vx_i, \Delta\tilde\vv_i^{(l)}$ as the residues, which are calculated by:
\begin{align}
    &\Delta\vh_i =\varphi_{\vh}\left(\vh_i, \frac{1}{|\mathcal{N}(i)|}\sum_{j\in\mathcal{N}(i)}\vm_{ij}\right), \    \Delta\vec\vx_i =\frac{1}{|\mathcal{N}(i)|}\sum_{j\in\mathcal{N}(i)}\varphi_{\vec\vx}(\vm_{ij})\cdot(\vec\vx_i-\vec\vx_j), \\
    \label{eq:deltav}
    &\Delta\tilde\vv_i^{(l)}=\frac{1}{|\mathcal{N}(i)|}\sum_{j\in\mathcal{N}(i)}\varphi_{\tilde\vv}^{(l)}(\vm_{ij})\cdot\left(\tilde\vv_i^{(l)}-\tilde\vv_j^{(l)}\right),
\end{align}
where $\varphi_{\vh},\varphi_{\vec\vx}, \varphi_{\tilde\vv}^{(l)}$ are different MLPs, and $\varphi_{\vec\vx}, \varphi_{\tilde\vv}^{(l)}$ both output a 1D scalar. Note that the application of \cref{eq:deltav} for all degrees can be compactly rewritten as $\bigoplus_{l=0}^L\Delta\tilde\vv_i^{(l)}=\frac{1}{|\mathcal{N}(i)|}\sum_{j\in\mathcal{N}(i)}1\otimes_{\text{cg}}^{\varphi_{\tilde\vv}(\vm_{ij})}\left(\bigoplus_{l=0}^L\left(\tilde\vv_i^{(l)}-\tilde\vv_j^{(l)}\right)\right)$  in the form of CG tensor product with the weights $\varphi_{\tilde\vv}(\vm_{ij})\coloneqq\bigoplus_{l=0}^L\varphi_{\tilde\vv}^{(l)}$. This form can be easily implemented using existing libraries such as \texttt{e3nn.o3.FullyConnectedTensorProduct}~\citep{geiger2022e3nn}. 
The resulting residues are used for the update: 
\begin{equation}
\vh_i=\vh_i+\Delta\vh_i, \quad \vec\vx_i=\vec\vx_i+\Delta\vec\vx_i, \quad \tilde\vv_i^{(l)}= \tilde\vv_i^{(l)}+ \Delta\tilde\vv_i^{(l)}. 
\end{equation}
In addition, we can augment the update of $\vec\vx_i$ with 1st-degree feature $\tilde\vv_i^{(1)}$, leading to $\vec\vx_i=\vec\vx_i+\Delta\vec\vx_i+\phi^{(1)}_{\tilde\vv}(\vh_i)\tilde\vv_i^{(1)}$, which yet is not explored in our experiments for the sake of simplicity. The final output of $\vh_i$ and $\vec\vx_i$ can be used for the node-level invariant prediction and equivariant prediction, respectively. We can also obtain a graph-level prediction by further adding a readout layer of all nodes.

We now analyze the expressivity of HEGNN. Apparently, by including high-degree features, HEGNN is able to avoid the loss of expressive ability even on symmetric graphs. Moreover, when tackling general geometric graphs, HEGNN is capable of characterizing the complete angle information of the input graph, if its maximal degree $L$ is sufficiently large. For concision and without losing the generality, we assume the steerable features $\tilde\vv_i^{(1)}$ are initialized with only spherical harmonics without the weights $\varphi^{(l)}_{\tilde\vv,\texttt{init}}$ in \cref{eq:init}. Let $\vec\vx_{is}=(\vec\vx_i-\vec\vx_s)/\|\vec\vx_i-\vec\vx_s\|$, the inner product $z_{ij}^{(l)}$ can be expanded as follows
\begin{equation}\label{eq:ship2Legendre}
    \left\langle
    \sum_{s\in\mathcal{N}(i)}Y^{(l)}\left(\vec\vx_{is}\right),
    \sum_{t\in\mathcal{N}(j)}Y^{(l)}\left(\vec\vx_{jt}\right)
    \right\rangle
    =
    \frac{4\pi}{2l+1}\sum_{s\in\mathcal{N}(i)}\sum_{t\in\mathcal{N}(j)}P^{(l)}\left(\left\langle\vec\vx_{is}, \vec\vx_{jt}\right\rangle\right),
\end{equation}
where $P^{(l)}:\sR\to\sR$ is Legendre polynomial of degree $l$, and \cref{eq:ship2Legendre} is based on the properties of spherical harmonics that $\langle Y^{(l)}(\vec\vx), Y^{(l)}(\vec\vy)\rangle=4\pi/(2l+1) \cdot P^{(l)}(\langle\vec\vx,\vec\vy\rangle), \|\vec\vx\|=\|\vec\vy\|=1$.  We have the following result. 
\begin{theorem}\label{theo:ploy_of_cos}
    For any geometric graph, there exists a bijection between the set of inner products $\{z_{ij}^{(l)}\}_{l=1}^{|\sA_{ij}|}$ given by~\cref{eq:ship2Legendre} and the set of edge angles $\sA_{ij}=\{\theta_{is,jt}\coloneqq\arccos\langle\vec\vx_{is}, \vec\vx_{jt}\rangle\}_{s\in\mathcal{N}(i),t\in\mathcal{N}(j)}$.
\end{theorem}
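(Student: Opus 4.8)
The plan is to read the statement as an injectivity claim. By \cref{eq:ship2Legendre} each $z_{ij}^{(l)}$ is an explicit symmetric function of the edge angles, so the forward map sending the multiset $\sA_{ij}$ (counted with multiplicity) to the tuple $(z_{ij}^{(1)},\dots,z_{ij}^{(n)})$, where $n\coloneqq|\sA_{ij}|$, is automatically well defined; the entire content is to show this map is invertible, i.e. that the angles can be reconstructed from the inner products. I would first set $c_{st}\coloneqq\langle\vec\vx_{is},\vec\vx_{jt}\rangle=\cos\theta_{is,jt}$ and note that, since $\arccos$ is a strictly decreasing bijection from $[-1,1]$ onto $[0,\pi]$, recovering the multiset $\{\theta_{is,jt}\}$ is equivalent to recovering the multiset of cosines $\{c_{st}\}$. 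This reduces everything to a one-dimensional reconstruction problem.

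The first step is to rewrite the normalized Legendre moments $L_l\coloneqq\frac{2l+1}{4\pi}z_{ij}^{(l)}=\sum_{s,t}P^{(l)}(c_{st})$ in terms of ordinary power sums $p_m\coloneqq\sum_{s,t}c_{st}^{\,m}$. The structural fact I would exploit is that the Legendre polynomial $P^{(l)}$ has degree exactly $l$ with nonzero leading coefficient, so $\{P^{(0)},\dots,P^{(n)}\}$ is a degree-triangular basis of the polynomials of degree at most $n$. Writing $P^{(l)}(c)=\sum_{m=0}^{l}\alpha_{l,m}c^m$ with $\alpha_{l,l}\neq0$ yields $L_l=\sum_{m=0}^{l}\alpha_{l,m}p_m$. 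Because $p_0=n$ is known a priori, this is a lower-triangular linear system with nonzero diagonal, and solving it recursively recovers $p_1,\dots,p_n$ uniquely from $L_1,\dots,L_n$.

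The second step passes from power sums to the multiset of cosines. Here I would invoke Newton's identities, which over $\R$ express the elementary symmetric polynomials $e_1,\dots,e_n$ of the $c_{st}$ as a uniquely solvable triangular function of $p_1,\dots,p_n$. The $e_k$ are the coefficients of the monic polynomial $\prod_{s,t}(x-c_{st})$, whose multiset of roots is precisely $\{c_{st}\}$; hence the cosines, and with them the angles, are determined. Chaining the two inversions gives injectivity, and combined with the well-definedness of the forward map this establishes the claimed bijection.

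The crux of the argument, and the main thing to verify carefully, is the degree-triangularity bookkeeping: one must confirm both that the number of available moments $l=1,\dots,n$ matches the $n$ degrees of freedom in an $n$-element multiset, and that each leading coefficient $\alpha_{l,l}$ is genuinely nonzero so that every inversion step is legitimate. A secondary point is that Newton's identities require dividing by $k$ to isolate $e_k$, which is harmless over $\R$ (characteristic zero); I would note this explicitly to keep the argument rigorous.
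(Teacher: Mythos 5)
Your proposal follows essentially the same route as the paper's proof: both convert the Legendre moments to power sums of the cosines via the degree-triangularity of the Legendre basis, invert the power sums to elementary symmetric polynomials by Newton's identities, recover the multiset of cosines as roots of the resulting monic polynomial via Vieta's formulas, and finally use the injectivity of $\arccos$ on the relevant range to pass back to the angles. Your version is, if anything, slightly more careful than the paper's (explicitly noting $p_0=|\sA_{ij}|$ is known, that each leading coefficient $\alpha_{l,l}\neq 0$, and that the division by $k$ in Newton's identities is harmless in characteristic zero), but it is the same argument.
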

\cref{theo:ploy_of_cos} states that the inner products of full degrees can recover the information of all angles between each pair of edges, affirming the enhanced expressivity of our HEGNN. The proof is derived mainly based on the fact that Legendre polynomials are orthogonal polynomial bases which can injectivly represent the set $\sA_{ij}$ thanks to Newton's identities. The details are deferred to the appendix. Although the upper-bound of the degree in \cref{theo:ploy_of_cos} grows rapidly with the graph size, it will be shown in our experiments that HEGNN with only $L\leq6$ 
is sufficient to outperform traditional models like EGNN~\cite{satorras2021en} and TFN~\cite{thomas2018tensor} in practice.

\section{Experiment}
\label{sec:exp}

\subsection{Expressivity on Symmetric Graphs}
\label{sec:toy_dataset}

\textbf{Design of experiments:} 
To experimentally verify the conclusion we proved above, we design a more comprehensive experiment based on code\footnote{\url{https://github.com/chaitjo/geometric-gnn-dojo}.} in \cite{joshi2023expressive}. 
This experiment uses four $k$-fold structures ($k \in \{2,3,5,10\}$) and five convex regular polyhedra shown in \cref{fig:title} as test objects, and the center of each is at the origin. In detail, an arbitrary rotation in $3D$ is acted on such symmetric structures called $\gG_0$ which ensures the geometric graph after rotation called $\gG_1$ does not coincide with the original one. The goal of our experiments is to check whether different equivariant neural networks can distinguish $\gG_0$ and $\gG_1$.

The models we select include two models that only use Cartesian coordinates: EGNN and GVP-GNN; and two models that use high-degree steerable features: TFN and MACE. However, TFN and MACE (denoted as TFN/MACE$_{l\leq L}$) always use all degrees $l\in \{0,\dots,L\}$, so it is not clear which degree(s) of steerable features distinguish the two geometric graphs. In our HEGNN, all steerable features corresponding to unwanted degrees could be masked during initialization in \cref{eq:init}, and we let HEGNN$_{l=L}$ be a HEGNN with only $L$th-degree steerable features. Additionally, to align with TFN/MACE, we also test the performance of HEGNN with all $l\in\{0, 1, \dots, L\}$ donated as HEGNN$_{l\leq L}$. Following the settings\footnote{It should be noted here that because only $0\sim 11$-th-degree spherical harmonics can be used in \texttt{e3nn}~\cite{geiger2022e3nn}, we only measure the models with up to 11th-degree here, and in \cref{sec:extra_polyhedra}, We have given a new verification method.} in \cite{joshi2023expressive}, the output of each graph is the concatenation of invariant scalars, coordinates, and high-degree steerable features pooling among all nodes. We then map this spliced vector to a two-dimensional vector and input it into a simple classifier to determine whether the equivariant graph neural network can distinguish $\gG_0$ from $\gG_1$.

\textbf{Results:} The results on $k$-fold are deferred to Appendix for saving space, and the results on regular polyhedra are shown in \cref{tab:platonic}. From \cref{tab:fail ell}, we can know steerable features in which degree could not distinguish specific symmetry structure, and both results on $k$-fold and regular polyhedra are also in perfect agreement with our conclusions. Models (EGNN and GVP-GNN) only with Cartesian vectors cannot distinguish any symmetric graph at all. Taking HEGNN$_{l=5}$ as an example, since $\mD^{(5)}(\mathfrak{H})=0,\forall\mathfrak{H}\in\{T,O,I\}$, no matter which kind of regular polyhedron, $f^{(5)}$ could only output 0 thus failing to distinguish the structures. 

\renewcommand{\able}{\cellcolor{green!10} \textbf{100.0 {\scriptsize ± 0.0}}}
\renewcommand{\unable}{50.0 {\scriptsize ± 0.0}}

\begin{table}[H]
\centering
\caption{\emph{Regular polyhedra.}}
\label{tab:platonic}
\resizebox{\textwidth}{!}{
\begin{tabular}{clccccc}
    \toprule
    & & \multicolumn{5}{c}{\textbf{Rotational symmetry}} \\
    & \textbf{GNN Layer} & Tetrahedron & Cube & Octahedron & Dodecahedron & Icosahedron \\
    \midrule
    \multirow{2}{*}{\rotatebox[origin=c]{90}{Cart.}}
    & E-GNN$_{l=1}$ & \unable & \unable & \unable & \unable & \unable \\
    & GVP-GNN$_{l=1}$ & \unable & \unable & \unable & \unable  & \unable\\
    \midrule
    \multirow{11}{*}{\rotatebox[origin=c]{90}{Single Type Spherical}}
    & HEGNN$_{l=1}$  & \unable & \unable & \unable & \unable & \unable \\
    & HEGNN$_{l=2}$  & \unable & \unable & \unable & \unable & \unable \\
    & HEGNN$_{l=3}$  & \able   & \unable & \unable & \unable & \unable \\
    & HEGNN$_{l=4}$  & \able   & \cellyel 90.0 {\scriptsize ±30.0} & \cellyel 90.0 {\scriptsize ±30.0} & \unable & \unable \\
    & HEGNN$_{l=5}$  & \unable & \unable & \unable & \unable & \unable \\
    & HEGNN$_{l=6}$  & \able   & \able   & \able   & \able   & \able   \\
    & HEGNN$_{l=7}$  & \able   & \unable & \unable & \unable & \unable \\
    & HEGNN$_{l=8}$  & \able   & \cellyel 90.0 {\scriptsize ±30.0} & \cellyel 90.0 {\scriptsize ±30.0} & \unable & \unable \\
    & HEGNN$_{l=9}$  & \able   & \unable & \unable & \unable & \unable \\
    & HEGNN$_{l=10}$ & \able   & \able   & \cellyel 95.0 {\scriptsize ±15.0} & \able   & \able   \\
    & HEGNN$_{l=11}$ & \able   & \unable & \unable & \unable & \unable \\
    \midrule
    \multirow{4}{*}{\rotatebox[origin=c]{90}{Sph.}}
    & HEGNN/TFN/MACE$_{l\leq 2}$  & \unable & \unable & \unable & \unable & \unable \\
    & HEGNN/TFN/MACE$_{l\leq 3}$  & \able   & \unable & \unable & \unable & \unable \\
    & HEGNN/TFN/MACE$_{l\leq 4}$  & \able   & \able   & \able   & \unable & \unable \\
    & HEGNN/TFN/MACE$_{l\leq 6}$  & \able   & \able   & \able   & \able   & \able   \\
    \bottomrule
\end{tabular}
}
\vspace{-0.1in}
\end{table}

\subsection{Physical Dynamics Simulation}
\label{sec:real_dataset}

\textbf{Datasets:} We benchmark our HEGNN in two scenarios, including:
\textbf{$N$-body system}~\cite{kipf2018neural} is a  dataset generated from simulations. In our simulations, each system contains $N$ charged particles with random charge $c_i \in \{\pm 1\}$, whose movements are driven by Coulomb forces. To verify the efficiency and effectiveness of our HEGNN on datasets of different sizes, we select $N$ from $\{5, 20, 50, 100\}$. We use $5000$ samples for training, $2000$ for validation, and $2000$ for testing. The task is to estimate the positions of the $N$ particles after 1,000 timesteps.
\textbf{MD17}~\citep{chmiela2017machine} dataset contains trajectory data for eight molecules generated through molecular dynamics simulations. The goal of this experiment is to predict the future positions of the atoms based on their current state. We follow the dataset partitioning scheme from \cite{huang2022equivariant}, splitting the dataset into 500/2000/2000 frame pairs for training, validation and testing, respectively.
All experiments are run on a single NVIDIA A100-80G GPU.

\textbf{Baselines:} To demonstrate the advantages of our HEGNN over both models with scalarization techniques and models with high-degree steerable vectors at the same time, our baseline needs to consider the selection issues of both models simultaneously. Therefore, we select some representative models as baselines, including the invariant RF~\citep{kohler2019equivariant}, the equivariant EGNN~\citep{satorras2021en}, TFN~\citep{thomas2018tensor} and SE(3)-Tr.~\citep{fuchs2020se}. In addition, we select classical models such as Linear dynamics~\citep{satorras2021en}, the non-equivariant Message Passing Neural Network (MPNN)~\citep{gilmer2017neural}, the invariant SchNet~\citep{schutt2018schnet}, and the equivariant GVP-GNN~\citep{jing2020learning} for the $N$-body experiments. For MD17 experiments, we also select GMN~\citep{huang2022equivariant}. 

\textbf{Metrics:} 
\textbf{1. Loss function:} We use Mean Squared Error (MSE) to measure the accuracy of the prediction results in both experiments. 
\textbf{2. Inference time:} Given that the $N$-body system we use contains data of varying sizes, we test the inference time of each model on this dataset. The inference time for each model is calculated relative to the benchmark, which is the inference time of EGNN at the corresponding scale.

\textbf{Results on $N$-Body systems:} The main results of $N$-body system simulation are presented in \cref{tab:nbody}. From these results, we observe the following: 
\textbf{1. Overall performance}: Our HEGNN significantly outperforms other models across datasets of all sizes. Although EGNN~\citep{satorras2021en} performs better than high-degree steerable models like TFN or $\mathrm{SE}(3)$-Transformer in this task, our HEGNN is still better than EGNN, which show that the method of HEGNN introducing high-degree steerable features is more effective.
\textbf{2. Stability}: Although the performance of the model (HEGNN$_{l\leq6}$) using high-degree steerable features declines slightly when the geometric graph is small, overall, HEGNN performs better than other models.
\textbf{3. Inference time}: Our model's inference time is significantly faster than that of high-degree steerable models like TFN, reflecting the simplicity and efficiency of our HEGNN.

\textbf{Results on MD17:} The main results of MD17 experiment are shown in \cref{tab:md17}, with some data sourced from \cite{huang2022equivariant}. From these results, we draw the following insights: 
\textbf{1. Overall performance:} Our HEGNN outperforms other models on six out of eight molecules. The effect on the remaining two molecules is only not as good as GMN~\cite{huang2022equivariant} and this is because GMN introduces additional knowledge such as chemical bonds. 
\textbf{2. Advantage of high-degree vectors:} Most of the best results are obtained on HEGNN$_{l\leq6}$, indicating that the use of high-degree steerable features can enhance model expression capabilities.

\begin{table}[H]
\centering
\vspace{-0.1in}
\caption{MSE and time-consuming ratio with EGNN~\citep{satorras2021en} on $N$-body system.}
\label{tab:nbody}
\adjustbox{width=\textwidth, center, padding = 2pt}{
\begin{tabular}{
    p{0.17\textwidth}<{\raggedright}
    p{0.08\textwidth}<{\centering}
    p{0.08\textwidth}<{\centering}
    p{0.08\textwidth}<{\centering}
    p{0.08\textwidth}<{\centering}
    p{0.08\textwidth}<{\centering}
    p{0.08\textwidth}<{\centering}
    p{0.08\textwidth}<{\centering}
    p{0.08\textwidth}<{\centering}
}
\toprule
    & \multicolumn{2}{c}{$5$-body} & \multicolumn{2}{c}{$20$-body} & \multicolumn{2}{c}{$50$-body} & \multicolumn{2}{c}{$100$-body}\\
    & MSE ($\times 10^{-2}$) & Relative Time & MSE ($\times 10^{-2}$) & Relative Time & MSE ($\times 10^{-2}$) & Relative Time & MSE ($\times 10^{-2}$) & Relative Time \\
\midrule
    Linear  & $7.72$  & $0.01$  & $10.12$ & $0.02$ & $11.81$  & $0.02$  & $12.69$ & $0.01$\\
    MPNN~\cite{gilmer2017neural}    & $1.80$  & $0.49$  & $2.50$  & $0.51$ & $2.96$   & $0.50$  & $3.55$  & $0.45$ \\
    SchNet~\cite{schutt2018schnet}  & $11.31$ & $2.93$  & $17.72$ & $6.24$ & $22.14$  & $31.63$ & $22.14$ & $27.04$ \\
    RF~\cite{kohler2019equivariant} & $1.51$  & $0.54$  & $3.41$  & $0.65$ & $4.75$   & $0.67$  & $5.72$  & $0.49$ \\
    GVP-GNN~\cite{jing2020learning}     & $7.26$  & $2.36$  & $5.76$  & $2.38$ & $7.07$   & $2.42$  & $7.55$  & $2.33$ \\
    EGNN~\citep{satorras2021en}    & $0.65$  & $1.00$  & $1.01$  & $1.00$ & $1.00$   & $1.00$  & $1.36$  & $1.00$ \\
    \midrule
    TFN$_{l\leq2}$ & $1.49$  & $2.69$  & $1.86$  & $3.19$ & $2.20$   & $2.87$  & $3.42$  & $6.58$ \\
    TFN$_{l\leq3}$ & $1.76$  & $3.91$  & $1.87$  & $4.54$ & $1.94$   & $4.89$  & OOM     & - \\
    SE(3)-Tr.$_{l\leq2}$ & $3.24$  & $4.94$ & $3.19$ & $5.88$ & $2.54$  & $5.97$ & $2.33$ & $5.15$ \\
\midrule
    HEGNN$_{l\leq 1}$ & $0.52$ & $1.77$ & \underline{0.79} & $1.84$ & \underline{0.88} & $1.60$ & $1.13$ & $1.45$ \\
    HEGNN$_{l\leq 2}$ & \textbf{0.47} & $1.88$ & \textbf{0.78} & $1.94$ & $0.90$ & $1.71$ & $0.97$ & $1.55$ \\
    HEGNN$_{l\leq 3}$ & \underline{0.48} & $2.11$ & $0.80$ & $2.23$ & \textbf{0.84} & $1.84$ & \underline{0.94} & $1.61$ \\
    HEGNN$_{l\leq 6}$ & $0.69$ & $2.14$ & $0.86$ & $2.43$ & $0.96$ & $2.18$ & \textbf{0.86} & $1.90$ \\
\bottomrule
\end{tabular}
}
\vspace{-0.1in}
\end{table}

\begin{table}[H]
\vspace{-0.1in}
\centering
\setlength{\tabcolsep}{2pt}
\small
\caption{Prediction error ($\times 10^{-2}$) on MD17 dataset. Results averaged across 3 runs. }
\resizebox{\textwidth}{!}{
\begin{tabular}{lcccccccc}
\toprule
      & Aspirin & Benzene & Ethanol & Malonaldehyde & Naphthalene & Salicylic & Toluene & Uracil \\
\midrule
    RF    & 10.94\tiny{$\pm$0.01}   & 103.72\tiny{$\pm$1.29}   & 4.64\tiny{$\pm$0.01}   & 13.93\tiny{$\pm$0.03}   & 0.50\tiny{$\pm$0.01}   & 1.23\tiny{$\pm$0.01}   & 10.93\tiny{$\pm$0.04}   & 0.64\tiny{$\pm$0.01}   \\
    EGNN  & 14.41\tiny{$\pm$0.15}  & 62.40\tiny{$\pm$0.53}  & 4.64\tiny{$\pm$0.01}  & 13.64\tiny{$\pm$0.01}  & 0.47\tiny{$\pm$0.02}  & 1.02\tiny{$\pm$0.02}  & 11.78\tiny{$\pm$0.07}  & 0.64\tiny{$\pm$0.01}  \\
    EGNNReg & 13.82\tiny{$\pm$0.19}  & 61.68\tiny{$\pm$0.37}  & 6.06\tiny{$\pm$0.01}  & 13.49\tiny{$\pm$0.06}  & 0.63\tiny{$\pm$0.01}  & 1.68\tiny{$\pm$0.01}  & 11.05\tiny{$\pm$0.01}  & 0.66\tiny{$\pm$0.01}  \\
    GMN   & 10.14\tiny{$\pm$0.03}  & \textbf{48.12}\tiny{$\pm$0.40}  & 4.83\tiny{$\pm$0.01}  & \underline{13.11}\tiny{$\pm$0.03}  & 0.40\tiny{$\pm$0.01}  & 0.91\tiny{$\pm$0.01}  & \textbf{10.22}\tiny{$\pm$0.08}  & 0.59\tiny{$\pm$0.01}  \\
\midrule
    TFN$_{l\leq2}$   & 12.37\tiny{$\pm$0.18}   & \underline{58.48}\tiny{$\pm$1.98}   & 4.81\tiny{$\pm$0.04}   & 13.62\tiny{$\pm$0.08}   & 0.49\tiny{$\pm$0.01}   & 1.03\tiny{$\pm$0.02}   & 10.89\tiny{$\pm$0.01}   & 0.84\tiny{$\pm$0.02}   \\
    SE(3)-Tr.$_{l\leq2}$ & 11.12\tiny{$\pm$0.06}   & 68.11\tiny{$\pm$0.67}   & 4.74\tiny{$\pm$0.13}   & 13.89\tiny{$\pm$0.02}   & 0.52\tiny{$\pm$0.01}   & 1.13\tiny{$\pm$0.02}   & 10.88\tiny{$\pm$0.06}   & 0.79\tiny{$\pm$0.02}  \\
\midrule
    HEGNN$_{l\leq1}$ & 10.32\tiny{$\pm$0.58} & 62.53\tiny{$\pm$7.62} & \underline{4.63}\tiny{$\pm$0.01} & \textbf{12.85}\tiny{$\pm$0.01} & \underline{0.38}\tiny{$\pm$0.01} & \underline{0.90}\tiny{$\pm$0.05} & 10.56\tiny{$\pm$0.10} & 0.56\tiny{$\pm$0.02} \\
    HEGNN$_{l\leq2}$ & \underline{10.04}\tiny{$\pm$0.45} & 61.80\tiny{$\pm$5.92} & \underline{4.63}\tiny{$\pm$0.01} & \textbf{12.85}\tiny{$\pm$0.01} & 0.39\tiny{$\pm$0.01} & 0.91\tiny{$\pm$0.06} & 10.56\tiny{$\pm$0.05} & 0.55\tiny{$\pm$0.01} \\
    HEGNN$_{l\leq3}$ & 10.20\tiny{$\pm$0.23} & 62.82\tiny{$\pm$4.25} & \underline{4.63}\tiny{$\pm$0.01} & \textbf{12.85}\tiny{$\pm$0.02} & \textbf{0.37}\tiny{$\pm$0.01} & 0.94\tiny{$\pm$0.10} & \underline{10.55}\tiny{$\pm$0.16} & \textbf{0.52}\tiny{$\pm$0.01} \\
    HEGNN$_{l\leq6}$ &  \textbf{9.94}\tiny{$\pm$0.07} & 59.93\tiny{$\pm$5.21} & \textbf{4.62}\tiny{$\pm$0.01} & \textbf{12.85}\tiny{$\pm$0.01} & \textbf{0.37}\tiny{$\pm$0.02} & \textbf{0.88}\tiny{$\pm$0.02} & 10.56\tiny{$\pm$0.33} & \underline{0.54}\tiny{$\pm$0.01} \\
\bottomrule
\end{tabular}%
}
\label{tab:md17}%
\vspace{-0.1in}
\end{table}%

\subsection{Perturbation Experiment}
\label{sec:Perturbation Experiment}

In practical scenarios, slight perturbations (such as molecular vibrations) can disrupt strict symmetry, potentially mitigating the conclusions outlined in \cref{theo:zero-function,theo:zero-trace}. We therefore designed this perturbation experiment for a simple study and were surprised to find that HEGNN can still bring better robustness through the introduction of high-degree steerable features.

\textbf{Design of experiments:} We take the tetrahedron as an example and compare the cases of EGNN, HEGNN$_{l= 3}$, and HEGNN$_{l\leq 3}$ when adding noise perturbations with results in \cref{tab:perturbation}. Here, $\varepsilon$ represents the ratio of noise, and the modulus of the noise obeys $\mathcal{N}(0,\varepsilon\cdot\mathbb{E}[\|\vec x-\vec x_c\|]\cdot I)$.

\begin{table}[H]
\centering
\vspace{-0.05in}
\caption{Results for perturbation experiment.}
\label{tab:perturbation}
\begin{tabular}{lcccc}
    \toprule
    & $\varepsilon=0.01$ & $\varepsilon=0.05$ & $\varepsilon=0.10$ & $\varepsilon=0.50$ \\
    \midrule
    EGNN & \unable
    & 45.0 {\scriptsize ± 15.0}
    & 65.0 {\scriptsize ± 22.9}
    & 60.0 {\scriptsize ± 20.0} \\
    HEGNN$_{l= 3}$ & \able & \able & \able & \able \\
    HEGNN$_{l\leq 3}$ & \able & \able & \able & \able \\
    \bottomrule
\end{tabular}
\vspace{-0.1in}
\end{table}

\textbf{Results:} It can be observed that the performance of EGNN is slightly improved in the presence of noise (from 50\% when $\varepsilon=0.01$ to 60\% when $\varepsilon=0.5$), while HEGNN demonstrates better robustness. Even though symmetry-breaking factors will make the geometric graph deviate from the symmetric state, the deviated graph is still roughly symmetric. In other words, the outputs of equivariant GNNs on the derivated graphs keep close to zero if the degree value is chosen to be those in Table~\ref{tab:fail ell}, which will still lead to defective performance.

\section{Conclusion}
In this paper, we challenged the prevailing notion that higher-degree steerable vectors are unnecessary for achieving expressivity in equivariant Graph Neural Networks (GNNs). Through rigorous theoretical analysis, we demonstrated that equivariant GNNs constrained to 1st-degree representations inevitably degenerate to zero functions when applied to symmetric structures, such as $k$-fold rotations and regular polyhedra. To address this limitation, we introduced HEGNN, a high-degree extension of the EGNN model. HEGNN enhances expressivity by integrating higher-degree steerable vectors while retaining the efficiency of the original model through a scalarization technique. Our extensive empirical evaluations on various datasets, including the symmetric toy dataset, $N$-body, and MD17, validate our theoretical predictions. HEGNN not only adheres to our theoretical insights but also exhibits significant performance improvements over existing models. These findings underscore the critical role of higher-degree representations in fully leveraging the potential of equivariant GNNs. 

\section{Acknowledgment}

This work was jointly supported by the following projects: 
the National Science and Technology Major Project under Grant 2020AAA0107300, 
the National Natural Science Foundation of China (No. 62376276, No. 62172422); %
Beijing Nova Program (No. 20230484278); 
Beijing Outstanding Young Scientist Program (No. BJJWZYJH012019100020098), 
the Fundamental Research Funds for the Central Universities, 
and the Research Funds of Renmin University of China (23XNKJ19); 
Public Computing Cloud, Renmin University of China.

\bibliography{Reference}
\bibliographystyle{unsrt}

\appendix
\newpage

\section{Theoretical Details}
\label{sec:appx_theo}
\subsection{Equivariance/Invariance of HEGNN}
In this section, we demonstrate the equivariance of our HEGNN. In order to further illustrate the connection between our HEGNN, EGNN, and TFN, a general proof is given here.
\begin{table}[H]
\centering
\caption{Comparison between our HEGNN and the scalarization-based model representing EGNN~\citep{satorras2021en}, and the high-degree steerable model representing TFN~\citep{thomas2018tensor}. HEGNN combines the scalarization trick of EGNN that only uses invariant scalars (0th degree steerable features) to interact between steerable features corresponding to different degrees, avoiding the high computational cost of using CG tensor products in TFN.}
\label{tab:model-compare}
\resizebox{\textwidth}{!}{
\begin{tabular}{
    m{0.05\textwidth}<{\centering}
    m{0.3\textwidth}<{\raggedright}
    m{0.4\textwidth}<{\raggedright}
    m{0.4\textwidth}<{\raggedright}
}
    \toprule
     & \multicolumn{1}{c}{EGNN~\citep{satorras2021en}} & \multicolumn{1}{c}{TFN~\citep{thomas2018tensor}} & \multicolumn{1}{c}{HEGNN (Ours)}\\
     \midrule
    Msg 
    & $\begin{aligned}
        &\vm_{ij}=\phi_{\vm}(\vh_i,\vh_j, \ve_{ij}, d_{ij}^{2})\\
        &\vec\vm_{ij}=\varphi_{\vec\vx}(\vm_{ij})\cdot(\vec\vx_i-\vec\vx_j)
    \end{aligned}$
    & $\begin{aligned}
        \tilde\vm^{(\sL)}_{ij}=\tilde\vv^{(\sL)}_i\otimes_{\text{cg}}^{\mW(d_{ij})}Y^{(\sL)}\left(\frac{\vec\vx_{ij}}{\|\vec\vx_{ij}\|}\right)
    \end{aligned}$
    & $\begin{aligned}
        &\vm_{ij}=\varphi_{\vm}(\vh_i,\vh_j, \ve_{ij}, d_{ij}^2, \textstyle\bigoplus_{l\in\sL}z_{ij}^{(l)})\\
        &\vec\vm_{ij}=\varphi_{\vec\vx}(\vm_{ij})\cdot(\vec\vx_i-\vec\vx_j)\\
        &\vec\vv_{ij}^{(l)}=\varphi_{\tilde\vv}^{(l)}(\vm_{ij})\cdot(\tilde\vv_i^{(l)}-\tilde\vv_j^{(l)})
    \end{aligned}$
    \\
    \midrule
    Agg 
    & $\begin{aligned}
        &\vm_{i}=\alpha_i\textstyle\sum_{j\in\mathcal{N}(i)}\vm_{ij}\\
        &\vec\vm_{i}=\alpha_i\textstyle\sum_{j\in\mathcal{N}(i)}\vec\vm_{ij}
    \end{aligned}$
    & $\begin{aligned}
        \tilde\vm^{(\sL)}_{i}=\alpha_i\textstyle\sum_{j\in\mathcal{N}(i)}\tilde\vm^{(\sL)}_{ij}
    \end{aligned}$
    & $\begin{aligned}
        &\vm_{i}=\alpha_i\textstyle\sum_{j\in\mathcal{N}(i)}\vm_{ij}\\
        &\vec\vm_{i}=\alpha_i\textstyle\sum_{j\in\mathcal{N}(i)}\vec\vm_{ij}\\
        &\tilde\vm_{i}^{(l)}=\alpha_i\textstyle\sum_{j\in\mathcal{N}(i)}\tilde\vm_{ij}^{(l)}
    \end{aligned}$\\
    \midrule
    Upd & $\begin{aligned}
        &\Delta\vh_i=\varphi_{\vh}(\vh_i,\vm_i)\\
        &\Delta\vec\vx_{i}=\vec\vm_{i}
    \end{aligned}$
    & $\begin{aligned}
        \Delta\vv^{(\sL)}_{i}=\vm^{(\sL)}_{i}
    \end{aligned}$
    & $\begin{aligned}
        &\Delta\vh_i=\varphi_{\vh}(\vh_i,\vm_i)\\
        &\Delta\vec\vx_{i}=\vec\vm_{i}\\
        &\Delta\tilde\vv_{i}^{(l)}=\tilde\vm_{i}^{(l)}
    \end{aligned}$\\
    \bottomrule
\end{tabular}
}
\end{table}

\begin{theorem}[Equivariance/Invariance of HEGNN]\label{theo:equofhegnn}
$\vh_{i}, \tilde\vv_{i}^{(0)}$ in HEGNN is $\mathrm{E}(3)$ invariant, $\vec\vx_{i}$ is $\mathrm{E}(3)$ equivariant. In addition, all $\tilde\vv_{i}^{(l)}$ are $O(3)$ equivariant and translation invariant when $l$ is odd; all $\tilde\vv_{i}^ {(l)}$ is $SO(3)$ equivariant and inversion/translation invariant to when $l$ even.
\end{theorem}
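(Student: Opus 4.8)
The plan is to proceed by induction on the message-passing layers, reducing the whole statement to three elementary transformation facts plus the structural observation that every nonlinearity in HEGNN acts only on invariant scalars. First I would record the facts the argument rests on: under $\mathfrak{g}=\mathfrak{rm}\in\mathrm{O}(3)$ a type-$l$ feature transforms by $\rho^{(l)}(\mathfrak{g})=\sigma^{(l)}(\mathfrak{m})\mD^{(l)}(\mathfrak{r})$; spherical harmonics obey $Y^{(l)}(\mR_{\mathfrak{r}}\vec\vx)=\mD^{(l)}(\mathfrak{r})Y^{(l)}(\vec\vx)$ together with $Y^{(l)}(-\vec\vx)=(-1)^l Y^{(l)}(\vec\vx)$, exactly as in the preliminaries; and every relative coordinate $\vec\vx_i-\vec\vx_j$ is translation invariant, which is the single source of translation invariance throughout.

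For the base case I would check the initialization in \cref{eq:init}. The input $\vh_i$ is an invariant node feature and $\vec\vx_i$ an equivariant coordinate by assumption. The scalar $\vm_{ij,\texttt{init}}$ is an MLP of the invariants $\vh_i,\vh_j,\ve_{ij},d_{ij}^2$, hence invariant. The argument of $Y^{(l)}$ is the normalized relative coordinate, which is translation invariant and rotates/reflects as a unit vector, so $Y^{(l)}$ of it transforms precisely by $\rho^{(l)}$; multiplying by invariant scalars and averaging over neighbors preserves this, endowing $\tilde\vv_{i,\texttt{init}}^{(l)}$ with the claimed $\rho^{(l)}$-equivariance, namely full $\mathrm{O}(3)$-equivariance when $l$ is odd and $\mathrm{SO}(3)$-equivariance with inversion invariance when $l$ is even. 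The $l=0$ channel collapses to a constant harmonic, so $\tilde\vv_i^{(0)}$ is a genuine invariant scalar.

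For the inductive step I would assume the features at a given layer carry the stated symmetries and propagate them through \cref{eq:msg,eq:deltav}. The crux is that $z_{ij}^{(l)}=\langle\tilde\vv_i^{(l)},\tilde\vv_j^{(l)}\rangle$ is $\mathrm{O}(3)$-invariant: since $\mD^{(l)}(\mathfrak{r})$ is orthogonal and the parity factor satisfies $(\sigma^{(l)}(\mathfrak{m}))^2=1$, both arguments transform identically and the inner product is unchanged. Consequently $\vm_{ij}$, an MLP of $d_{ij}^2$, the node/edge invariants, and the $z_{ij}^{(l)}$, is invariant, and so are all gating scalars $\varphi_{\vec\vx}(\vm_{ij})$ and $\varphi_{\tilde\vv}^{(l)}(\vm_{ij})$. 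It then follows mechanically that $\Delta\vh_i$ is invariant; that $\Delta\vec\vx_i$, an invariant scalar times the equivariant difference $\vec\vx_i-\vec\vx_j$, is equivariant and translation invariant; and that $\Delta\tilde\vv_i^{(l)}$, an invariant scalar times the $\rho^{(l)}$-covariant difference $\tilde\vv_i^{(l)}-\tilde\vv_j^{(l)}$, again transforms by $\rho^{(l)}$. Because each residual update adds quantities of matching symmetry type, the updated features retain their symmetries, closing the induction, with the $l=0$ channel staying invariant since $\rho^{(0)}\equiv 1$.

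The main obstacle, and the only place where a substantive identity is invoked rather than routine bookkeeping, is the $\mathrm{O}(3)$-invariance of $z_{ij}^{(l)}$, in particular the cancellation of the parity signs in the inversion case. Once this is secured, every nonlinear map in the network sees only invariants, so equivariance and invariance are preserved by construction and the remainder is a direct induction.
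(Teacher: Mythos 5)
Your proof is correct, and its skeleton matches the paper's: both reduce the theorem to (i) equivariance of the spherical-harmonics initialization and (ii) equivariance of each layer, then conclude for the whole network — your layer-by-layer induction plays exactly the role of the paper's composition lemma (that group elements commute through a chain of equivariant maps and are absorbed by any invariant one). Where you genuinely diverge is in how the per-layer step is justified. The paper appeals to the Clebsch--Gordan formalism: it extracts the Wigner-D matrix through the tensor product identity $\mD^{(l)}(\mathfrak{h})\tilde\vv^{(l)}=\bigl[\bigl(\mD^{(l_1)}(\mathfrak{h})\tilde\vv^{(l_1)}\bigr)\otimes_{\text{cg}}^{\mW}\bigl(\mD^{(l_2)}(\mathfrak{h})\tilde\vv^{(l_2)}\bigr)\bigr]^{(l)}$ and treats norms and inner products as the degenerate $(l,\,l)\to 0$ path, with the scalar-gated updates being the $(0,\,l)\to l$ path; this buys a structural insight — HEGNN and EGNN are TFN-style models restricted to these two CG paths — which the paper uses to position HEGNN relative to TFN and SEGNN. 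You instead prove the invariance of $z_{ij}^{(l)}$ directly from orthogonality of the Wigner-D matrices together with the parity cancellation $\bigl(\sigma^{(l)}(\mathfrak{m})\bigr)^2=1$, which is more elementary and self-contained, at the cost of the CG-level interpretation. One small caveat for your version: the orthogonality of $\mD^{(l)}(\mathfrak{r})$ holds in the \emph{real} spherical-harmonics basis (as used by e3nn and implicitly by the paper); in the complex basis the matrices are merely unitary and the invariance of the pairing requires the appropriate conjugate-linear inner product, so you should state the basis convention explicitly. With that noted, your argument is complete, including the correct handling of the $l=0$ channel, the translation invariance sourced entirely from relative coordinates, and the residual updates preserving each feature's transformation type.
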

\begin{proof}
Consider a sequence composed of functions $\{\varphi_i:\gX^{(i-1)}\rightarrow\gX^{(i)}\}_{i=1}^N$ equivariant to a same group $\mathfrak{H}$, the equivariance lead to an interesting property that  
\begin{equation*}
    \varphi_N\circ\cdots\circ\varphi_{i+1}\circ \rho_{\gX^{(i)}}(\mathfrak{h})\varphi_{i}\circ\cdots\circ\varphi_1=\varphi_N\circ\cdots\circ\varphi_{j+1}\circ \rho_{\gX^{(j)}}(\mathfrak{h})\varphi_{j}\circ\cdots\circ\varphi_1,
\end{equation*}
holds for all $i,j\in\{1,2,\dots,N\}$ and $\mathfrak{h}\in \mathfrak{H}$, which means that the group elements $\mathfrak{h}$ can be freely exchanged in the composite sequence of equivariant functions. In particular, if one of the equivariant functions (\emph{e.g.} $\varphi_k$) is replaced by an invariant function, the group element $\mathfrak{h}$ will be absorbed, which means 
\begin{equation*}
    \varphi_N\circ\cdots\circ\varphi_k\circ\cdots\circ\varphi_{i+1}\circ \rho_{\gX^{(i)}}(\mathfrak{h})\varphi_{i}\circ\cdots\circ\varphi_1=\varphi_N\circ\cdots\circ\varphi_1.
\end{equation*}
holds for all $\mathfrak{h}\in \mathfrak{H}$ but only $i\in\{1,2,\dots,k\}$. Although $\varphi_N\circ\cdots\circ\varphi_k$ is still equivariant, because the group elements must be input starting from $\varphi_1$, the overall $\varphi_N\circ\cdots\circ\varphi_1$ is still an invariant function. That is to say, to conclude that the entire HEGNN is equivariant, we only need to prove that HEGNN is equivariant in initialization and each layer.

The initialization of HEGNN is based on spherical harmonics, which is similar to TFN. Spherical harmonics are inherently equivariant, that is,
\begin{equation*}
    Y^{(l)}(\mR_{\mathfrak{r}}\vec\vx)=\mD^{(l)}(\mathfrak{r})Y^{(l)}(\vec\vx).
\end{equation*}
Note that variables participating in the coefficient in \cref{eq:init} are all invariant scalars, so the initialization of HEGNN is consistent with the spherical harmonic function. Note that for Cartesian vectors, they can be aligned by arranging the spherical harmonics of 1st degree~\citep{thomas2018tensor}, that is,
\begin{equation*}
    \vec\vx\propto Y^{(1)}(\vec\vx).
\end{equation*}
From this perspective, EGNN can also be considered to be initialized using spherical harmonics, but this step is omitted because the value is proportional to the input Cartesian vector.

It is worth explaining that spherical harmonics are inversion invariant when $l$ is even, that is,
\begin{equation*}
    Y^{(l)}(\mathfrak{m}\vec\vx)=Y^{(l)}(\vec\vx).
\end{equation*}
Equivariant ones are not necessarily better than invariant ones. When we need to predict pseudovectors (such as moments), we need to inversion invariant 1st-degree steerable features, because pseudovectors are inversion invariant. This is why introducing the cross product $\vec\vx\times\vec\vy$ (the result is inversion invariant) into a linear combination can only build a $\mathrm{SE}(3)$ equivariant network, but not a $\mathrm{E}(3)$ network~\cite{du2022se}.

In fact, inversion equivariant/invariant high degree steerable features can be obtained by calculating the CG tensor product of spherical harmonics and inversion equivariant Cartesian vectors like $\vec\vx_i-\vec\vx_c$~\citep{geiger2022e3nn}. Moreover, the equivariance at each layer is also easy to prove and the internal Wigner-D matrix can be extracted through the CG tensor product~\citep{han2024survey}.
\begin{equation*}
    \mD^{(l)}(\mathfrak{h})\tilde\vv^{(l)}=\left[\left(\mD^{(l_1)}(\mathfrak{h})\tilde\vv^{(l_1)}\right) \otimes_{\text{cg}}^{\mW} \left(\mD^{(l_2)}(\mathfrak{h})\tilde\vv^{(l_2)}\right)\right]^{(l)}.
\end{equation*}
When the output result is an invariant scalar, the Wigner-D matrix degenerates into a trivial representation $1$. Norm and inner product are all special cases of this type, so equivariance is established. From this perspective, EGNN and HEGNN are equivalent to using only the weight coefficients of $(l,\,l)\to 0$ and $(0,\,l)\to l$. Similar ideas include the steerable MLPs in \citep{brandstetter2021geometric} .
\end{proof}

\subsection{Other Proofs}
\begin{theorem}[\cref{theo:group-average}]
Suppose that $f^{(l)}$ is an $\mathrm{O}(3)$-equivariant function on geometric graphs, regarding the group representation $\rho^{(l)}$ defined in~\cref{eq:rho}. Then, for any symmetric graph $\gG$ induced by the group $\mathfrak{H}\leq\mathrm{O}(3)$, namely, $\forall\gG\in\sG(\mathfrak{H})$, we always have 
\begin{equation}
    f^{(l)}(\gG)=\rho^{(l)}(\mathfrak{H})f^{(l)}(\gG).
\end{equation}
Here we have defined group average as $\rho^{(l)}(\mathfrak{H})\coloneqq\frac{1}{|\mathfrak{H}|}\sum_{\mathfrak{h}\in \mathfrak{H}}\rho^{(l)}(\mathfrak{h})$.
\end{theorem}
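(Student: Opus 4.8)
The plan is to derive the group-average identity from two ingredients—the $\mathrm{O}(3)$-equivariance of $f^{(l)}$ and the defining property of a symmetric graph—and then to average the resulting per-element identities over $\mathfrak{H}$. First I would fix an arbitrary $\mathfrak{h}\in\mathfrak{H}$ and aim at the pointwise identity $\rho^{(l)}(\mathfrak{h})f^{(l)}(\gG)=f^{(l)}(\gG)$. Applying the equivariance assumption of \cref{eq:equ_with_rep} to the group action on graphs gives $f^{(l)}(\mathfrak{h}\cdot\gG)=\rho^{(l)}(\mathfrak{h})f^{(l)}(\gG)$, where $\mathfrak{h}\cdot\gG$ is the graph whose coordinates are transformed by the orthogonal matrix $\mO$ representing $\mathfrak{h}$. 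On the other hand, since $\gG\in\sG(\mathfrak{H})$, \cref{def:symmetric graph} gives $\mathfrak{h}\cdot\gG=\gG$ at the graph level. Chaining the two facts yields $\rho^{(l)}(\mathfrak{h})f^{(l)}(\gG)=f^{(l)}(\gG)$ for every $\mathfrak{h}\in\mathfrak{H}$.

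The subtle part—and what I expect to be the main obstacle—is that the graph-level equality $\mathfrak{h}\cdot\gG=\gG$ is not a literal coordinate identity but holds only up to a node permutation: as spelled out after \cref{def:symmetric graph}, there exists $\mP\in S_N$ with $\mO\vec\mX=\vec\mX\mP$ and $\mP\mA=\mA\mP$. Hence, to pass from the geometric equivariance to $f^{(l)}(\mathfrak{h}\cdot\gG)=f^{(l)}(\gG)$, I must invoke the standing assumption stated at the opening of \cref{sec:theory} (justified by the global readout, or by treating per-node message passing) that $f^{(l)}$ is invariant to node permutations. Under that assumption the permuted copy is indistinguishable from $\gG$, so the output is unchanged and the pointwise identity is legitimate. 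I would also note that the inversion component, present whenever $\mathfrak{h}$ contains $\mathfrak{i}$, needs no separate treatment: it is already absorbed into $\rho^{(l)}$ through the parity factor $\sigma^{(l)}$ of \cref{eq:rho}, so the equivariance relation and hence the whole argument hold uniformly over $\mathrm{O}(3)$ rather than just $\mathrm{SO}(3)$.

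Finally I would average the pointwise identities over the group. Summing $\rho^{(l)}(\mathfrak{h})f^{(l)}(\gG)=f^{(l)}(\gG)$ over all $\mathfrak{h}\in\mathfrak{H}$ and dividing by $|\mathfrak{H}|$ gives
\begin{equation*}
\rho^{(l)}(\mathfrak{H})f^{(l)}(\gG)=\frac{1}{|\mathfrak{H}|}\sum_{\mathfrak{h}\in\mathfrak{H}}\rho^{(l)}(\mathfrak{h})f^{(l)}(\gG)=\frac{1}{|\mathfrak{H}|}\sum_{\mathfrak{h}\in\mathfrak{H}}f^{(l)}(\gG)=f^{(l)}(\gG),
\end{equation*}
which is exactly the claimed equation. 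The finiteness of $\mathfrak{H}$ guaranteed by \cref{def:symmetric graph} ensures the average $\rho^{(l)}(\mathfrak{H})$ is well defined, and no machinery deeper than this linear-averaging step is needed once the pointwise identity is secured. This also makes transparent why the downstream \cref{theo:zero-function} follows immediately, since the identity rearranges into $\bigl(\mI_{2l+1}-\rho^{(l)}(\mathfrak{H})\bigr)f^{(l)}(\gG)=\bm{0}$.
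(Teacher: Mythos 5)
Your proposal is correct and follows essentially the same route as the paper's own proof: derive $f^{(l)}(\gG)=f^{(l)}(\mathfrak{h}\cdot\gG)=\rho^{(l)}(\mathfrak{h})f^{(l)}(\gG)$ from equivariance plus the graph symmetry $\mathfrak{h}\cdot\gG=\gG$, then average over the finite group $\mathfrak{H}$. Your explicit treatment of the node-permutation subtlety and of the parity factor $\sigma^{(l)}$ is a welcome elaboration of points the paper's terser proof leaves implicit, but it does not change the argument.
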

\begin{proof}
If $f^{(l)}$ is $\mathrm{O}(3)$-equivariant, then it is also a $\mathfrak{H}$-equivariant, thus
\begin{equation*}
    f^{(l)}(\gG)=\frac{1}{|\mathfrak{H}|}\sum_{\mathfrak{h}\in \mathfrak{H}}f^{(l)}(\mathfrak{h}\cdot \gG)=\frac{1}{|\mathfrak{H}|}\sum_{\mathfrak{h}\in \mathfrak{H}}\rho(\mathfrak{h})f^{(l)}(\gG)=\left(\frac{1}{|\mathfrak{H}|}\sum_{\mathfrak{h}\in \mathfrak{H}}\rho(\mathfrak{h})\right)f^{(l)}(\gG)=\rho(\mathfrak{H})f^{(l)}(\gG).
\end{equation*}
\end{proof}

\begin{theorem}[\cref{theo:zero-function}]
If and only if the matrix $\mI_{2l+1}-\rho^{(l)}(\mathfrak{H})$ is non-singular, the $\mathrm{O}(3)$-equivariant function $f^{(l)}$ is always a zero function on $\gG$, namely,
\begin{equation}
    f^{(l)}(\gG)\equiv \bm{0}, \quad \forall \gG\in\sG(\mathfrak{H}).
\end{equation}
\end{theorem}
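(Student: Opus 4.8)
The forward implication follows directly from \cref{theo:group-average}. That theorem supplies, for every $\mathrm{O}(3)$-equivariant $f^{(l)}$ and every $\gG\in\sG(\mathfrak{H})$, the homogeneous linear equation $\left(\mI_{2l+1}-\rho^{(l)}(\mathfrak{H})\right)f^{(l)}(\gG)=\bm{0}$. If $\mI_{2l+1}-\rho^{(l)}(\mathfrak{H})$ is non-singular, this system admits only the trivial solution, so $f^{(l)}(\gG)=\bm{0}$ on every symmetric graph. Thus I would dispatch the ``if'' direction in a single line.

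The content lies in the converse, which I would prove by contraposition: assuming $\mI_{2l+1}-\rho^{(l)}(\mathfrak{H})$ is singular, I must exhibit an equivariant $f^{(l)}$ and a symmetric graph on which it does \emph{not} vanish. The first step is to recognize $\rho^{(l)}(\mathfrak{H})=\frac{1}{|\mathfrak{H}|}\sum_{\mathfrak{h}\in\mathfrak{H}}\rho^{(l)}(\mathfrak{h})$ as a projection: because $\rho^{(l)}$ is a homomorphism, reindexing $\sum_{\mathfrak{h}'}\rho^{(l)}(\mathfrak{h}\mathfrak{h}')=\sum_{\mathfrak{g}\in\mathfrak{H}}\rho^{(l)}(\mathfrak{g})$ gives $\rho^{(l)}(\mathfrak{H})^2=\rho^{(l)}(\mathfrak{H})$. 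Hence $\mI_{2l+1}-\rho^{(l)}(\mathfrak{H})$ is singular precisely when this projection is non-zero, i.e. when the $\mathfrak{H}$-invariant subspace $\mathrm{Fix}(\rho^{(l)}(\mathfrak{H}))$ is non-trivial.

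To realize a non-zero value there, I would take the orbit of a generic unit vector $p$ under $\mathfrak{H}$ as the symmetric graph, $\gG_p\coloneqq\{\mathfrak{h}\cdot p:\mathfrak{h}\in\mathfrak{H}\}$ (which lies in $\sG(\mathfrak{H})$ by construction), together with the equivariant readout $f^{(l)}(\gG)=\sum_i Y^{(l)}(\vec\vx_i)$; this is equivariant because $Y^{(l)}(\mR_{\mathfrak{r}}\vec\vx)=\mD^{(l)}(\mathfrak{r})Y^{(l)}(\vec\vx)$ and summing over nodes is permutation-invariant. Substituting $Y^{(l)}(\mathfrak{h}\cdot p)=\rho^{(l)}(\mathfrak{h})Y^{(l)}(p)$ collapses the output to $f^{(l)}(\gG_p)=|\mathfrak{H}|\,\rho^{(l)}(\mathfrak{H})\,Y^{(l)}(p)$, i.e. $|\mathfrak{H}|$ times the projection of $Y^{(l)}(p)$ onto the invariant subspace (for generic $p$ the orbit has the full $|\mathfrak{H}|$ nodes, so no node collision complicates the count).

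It then remains to pick $p$ so that this projection does not vanish, and this is where I expect the real work. The key fact is that the components $\{Y_m^{(l)}\}_{m=-l}^{l}$ are linearly independent functions on the sphere, so the vectors $\{Y^{(l)}(p):\|p\|=1\}$ span the whole $(2l+1)$-dimensional representation space. Since $\rho^{(l)}(\mathfrak{H})\neq\bm{0}$ in the singular case, it cannot annihilate a spanning set, so some $p$ yields $\rho^{(l)}(\mathfrak{H})Y^{(l)}(p)\neq\bm{0}$ and therefore $f^{(l)}(\gG_p)\neq\bm{0}$, finishing the contrapositive. The main obstacle is precisely this converse: \cref{theo:group-average} only traps the output inside $\mathrm{Fix}(\rho^{(l)}(\mathfrak{H}))$, and to show that the degeneracy is \emph{equivalent} to non-singularity one must construct a concrete equivariant function whose image actually fills this subspace — the completeness/spanning property of spherical harmonics is what makes that possible and also underlies the dimension count $\mathrm{dim}(f^{(l)})=(2l+1)-\mathrm{rank}(\mI_{2l+1}-\rho^{(l)}(\mathfrak{H}))$ stated after the theorem.
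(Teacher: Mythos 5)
Your ``if'' direction is, in essence, the paper's \emph{entire} proof: the appendix dispatches the whole theorem via the equivalence $f^{(l)}(\gG)=\rho^{(l)}(\mathfrak{H})f^{(l)}(\gG)\iff\left(\mI_{2l+1}-\rho^{(l)}(\mathfrak{H})\right)f^{(l)}(\gG)=\bm{0}$ plus the phrase ``basic knowledge of linear algebra,'' and it never constructs a witness for the ``only if'' half. So your converse --- the idempotence of the group average (which the paper proves separately, inside the proof of \cref{theo:zero-trace}), the orbit graph $\gG_p$, the readout $\sum_i Y^{(l)}(\vec\vx_i)$, and the spanning property of $\{Y^{(l)}(p):\|p\|=1\}$ --- goes genuinely beyond what the paper writes, and it is the right strategy: non-singularity of $\mI_{2l+1}-\rho^{(l)}(\mathfrak{H})$ only becomes \emph{equivalent} to degeneration once attainability is shown. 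The paper itself implicitly concedes this subtlety in \cref{sec:app_further} (``the degree not indicated to degenerate not necessarily produce a non-zero representation''), so your attempt to close it is valuable rather than redundant.

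There is, however, a concrete gap in your witness: the centering convention. The paper's setting quotients out translations by fixing the centroid of $\vec\mX$ at the origin before the $\mathrm{O}(3)$ analysis. For generic $p$, the orbit $\gG_p$ is \emph{not} centered whenever the fixed subspace $V^{\mathfrak{H}}\subseteq\R^3$ of the standard representation is nontrivial: the centroid of the orbit is exactly the projection of $p$ onto $V^{\mathfrak{H}}$. Your genericity remark handles stabilizers and node collisions, but not this. If you recenter --- equivalently, restrict to $p\perp V^{\mathfrak{H}}$ --- your spanning argument breaks, because $\{Y^{(l)}(q): q\in S^2\cap(V^{\mathfrak{H}})^\perp\}$ need not span, and $\rho^{(l)}(\mathfrak{H})$ can annihilate all of it. Concretely, take $\mathfrak{H}=C_n$ (rotations about the $z$-axis) and $l$ odd with $l<n$: the group average retains only the $m=0$ block, the only centered single orbits are equatorial rings, and $Y^{(l)}_0$ on the equator is proportional to $P^{(l)}(0)=0$, so $f^{(l)}(\gG_p)=\bm{0}$ for \emph{every} admissible single orbit even though $\rho^{(l)}(C_n)\neq\bm{0}$. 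The repair is cheap: $\sG(\mathfrak{H})$ contains multi-orbit graphs, so take the union of two $C_n$-rings at heights $\pm h$ with distinct radii (the union is symmetric and centered) and a radial weight $\varphi$ supported near one ring's radius; the surviving $m=0$ component is then proportional to $P^{(l)}(\cos\theta_1)$ with $\theta_1\neq\pi/2$, which is nonzero for suitable $h$. With multi-orbit witnesses and radial modulation in place of the single generic orbit, your converse --- and the dimension formula $\mathrm{dim}(f^{(l)})=(2l+1)-\mathrm{rank}\left(\mI_{2l+1}-\rho^{(l)}(\mathfrak{H})\right)$ you cite --- is sound.
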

\begin{proof}
From \cref{theo:group-average}, we know that
\begin{equation*}
    f^{(l)}(\gG)=\rho(\mathfrak{H})f^{(l)}(\gG)\iff \left(\mI_{2l+1}-\rho^{(l)}(\mathfrak{H})\right)f^{(l)}(\gG)=0,
\end{equation*}
and the theorem holds for basic knowledge of linear algebra.
\end{proof}

\begin{theorem}[\cref{theo:zero-trace}]
For a finite group $\mathfrak{H}$ with its representation $\rho^{(l)}$, $\rho^{(l)}(\mathfrak{H})$ is a zero matrix (\emph{i.e.}, $\rho^{(l)}(\mathfrak{H})=\bm{0}$) if and only if $\tr(\rho^{(l)}(\mathfrak{H}))=0$. In this case, $f^{(l)}(\gG)\equiv \bm{0}, \forall \gG\in\sG(\mathfrak{H})$.
\end{theorem}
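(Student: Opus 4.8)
The plan is to recognize that the group average $P\coloneqq\rho^{(l)}(\mathfrak{H})=\frac{1}{|\mathfrak{H}|}\sum_{\mathfrak{h}\in\mathfrak{H}}\rho^{(l)}(\mathfrak{h})$ is a projection operator onto the $\mathfrak{H}$-invariant subspace, so that its trace must coincide with its rank and hence cannot vanish unless $P$ itself vanishes. First I would establish the absorption identity $\rho^{(l)}(\mathfrak{g})P=P$ for every $\mathfrak{g}\in\mathfrak{H}$. Since $\rho^{(l)}$ is a group homomorphism, $\rho^{(l)}(\mathfrak{g})\rho^{(l)}(\mathfrak{h})=\rho^{(l)}(\mathfrak{g}\mathfrak{h})$, and because left multiplication by $\mathfrak{g}$ merely permutes the elements of the finite group $\mathfrak{H}$, the reindexed sum $\sum_{\mathfrak{h}\in\mathfrak{H}}\rho^{(l)}(\mathfrak{g}\mathfrak{h})$ equals $\sum_{\mathfrak{h}\in\mathfrak{H}}\rho^{(l)}(\mathfrak{h})$; dividing by $|\mathfrak{H}|$ gives $\rho^{(l)}(\mathfrak{g})P=P$. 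Averaging this identity over $\mathfrak{g}\in\mathfrak{H}$ then yields $P^2=P$, so $P$ is idempotent.

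Next I would invoke the elementary fact that an idempotent matrix is diagonalizable with every eigenvalue lying in $\{0,1\}$. Consequently $\tr(P)$ equals the multiplicity of the eigenvalue $1$, which is precisely $\mathrm{rank}(P)=\dim(\mathrm{Im}\,P)$, a nonnegative integer. The claimed equivalence then follows immediately: if $\tr(P)=0$ then $\mathrm{rank}(P)=0$, forcing $P=\bm{0}$, while the converse $P=\bm{0}\Rightarrow\tr(P)=0$ is trivial. This step is the heart of the statement, as it collapses the matrix-valued condition $\rho^{(l)}(\mathfrak{H})=\bm{0}$ into the single scalar check $\tr(\rho^{(l)}(\mathfrak{H}))=0$; note that in practice this trace is conveniently computed as $\frac{1}{|\mathfrak{H}|}\sum_{\mathfrak{h}\in\mathfrak{H}}\tr(\rho^{(l)}(\mathfrak{h}))$, i.e. the average of the representation's character, which is what enables the tabulation in \cref{tab:fail ell}.

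Finally, once $\rho^{(l)}(\mathfrak{H})=\bm{0}$ is in hand, the zero-function conclusion follows directly from \cref{theo:zero-function}: the matrix $\mI_{2l+1}-\rho^{(l)}(\mathfrak{H})$ reduces to $\mI_{2l+1}$, which is non-singular, so $f^{(l)}(\gG)\equiv\bm{0}$ for every $\gG\in\sG(\mathfrak{H})$. I expect the only real obstacle to be the idempotency argument—carefully justifying the reindexing of the group sum and then appealing to the diagonalizability of projections to equate trace with rank—whereas the final deduction is a one-line corollary of the earlier results.
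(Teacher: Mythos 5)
Your proposal is correct, and it diverges from the paper's proof in its key closing lemma. Both arguments begin identically: you and the paper establish the absorption identity $\rho^{(l)}(\mathfrak{g})\rho^{(l)}(\mathfrak{H})=\rho^{(l)}(\mathfrak{H})$ by reindexing the group sum, and average over $\mathfrak{g}$ to obtain idempotency of $P\coloneqq\rho^{(l)}(\mathfrak{H})$. From there the paper iterates to get $P^k=P$ for all $k\in\sN_+$, deduces $\tr(P^k)=0$ for every $k$, and invokes Newton's identities to conclude that all eigenvalues of $P$ vanish, asserting this makes $P$ the zero matrix. You instead use the standard projection fact: since $P^2=P$, the minimal polynomial of $P$ divides $x^2-x$, so $P$ is diagonalizable with spectrum in $\{0,1\}$, whence $\tr(P)=\mathrm{rank}(P)$ and $\tr(P)=0$ forces $P=\bm{0}$ directly. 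Your route is slightly more economical and, notably, more airtight at the last step: "all eigenvalues zero" by itself only yields nilpotency, and one must re-use idempotency (a nilpotent idempotent is zero) to finish the paper's argument --- a gloss the paper leaves implicit, whereas your trace-equals-rank argument makes the role of idempotency explicit and avoids the issue entirely. The paper's Newton's-identities route buys a technique that reappears elsewhere in the appendix (it is also cited in the proof of \cref{theo:ploy_of_cos}), while yours buys the conceptual identification of $P$ as the projector onto the $\mathfrak{H}$-invariant subspace, which cleanly explains why $\tr(P)$ counts invariant dimensions and underlies the character computations in \cref{tab:fail ell}. You also explicitly discharge the final clause $f^{(l)}(\gG)\equiv\bm{0}$ via \cref{theo:zero-function}, which the paper's proof states only implicitly.
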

\begin{proof}
It is obvious that $\rho^{(l)}(\mathfrak{H})=\bm{0}\implies\tr(\rho^{(l)}(\mathfrak{H}))=0$ since all elements are zero not to mention the main diagonal, and we only to prove $\tr(\rho^{(l)}(\mathfrak{H}))=0\implies\rho^{(l)}(\mathfrak{H})=\bm{0}$.

A basic fact is $\mathfrak{h}\cdot\mathfrak{H}=\mathfrak{H}$, thereby
\begin{equation*}
    \rho^{(l)}(\mathfrak{h})\rho^{(l)}(\mathfrak{H})=\rho^{(l)}(\mathfrak{H}).
\end{equation*}
Now we can use group average and get
\begin{equation*}
    \left(\rho^{(l)}(\mathfrak{H})\right)^2=\rho^{(l)}(\mathfrak{H})
\end{equation*}
Such operation can be repeated many times, so that
\begin{equation*}
    \left(\rho^{(l)}(\mathfrak{H})\right)^k=\rho^{(l)}(\mathfrak{H}),\qquad \forall k\in\sN_+.
\end{equation*}
Now we calculate the trace for each matrix and find
\begin{equation*}
    \tr\left(\left(\rho^{(l)}(\mathfrak{H})\right)^k\right)=\tr\left(\rho^{(l)}(\mathfrak{H})\right)=0,\qquad \forall k\in\sN_+.
\end{equation*}
By Newton's identity~\citep{epperson2013introduction}, all eigenvalues of the matrix $\rho(\mathfrak{H})$ are 0, that is, the matrix is a zero matrix.
\end{proof}
\begin{table}[H]
\centering
\caption{The traces of symmetric groups based on~\citep{engel2021point}. Trace for polyhedral groups can be calculated by $\mD^{(l)}(H)=\lfloor l/r\rfloor+b[l\operatorname{mod} r]$ with repeat length $r$, where $b$ is a string only with 0 or 1. For exmaple, for Tetrahedral group $T$, $\mD^{(5)}(T)=\lfloor 5/6\rfloor+b[5\operatorname{mod} 6]=0+0=0$.}
\label{tab:trace}
\begin{tabular}{lcl}
    \toprule
    Group & Notation & Data for Wigner-D matrix traces $\mD^{(l)}(H)$\\
    \midrule
    Reflection group & $C_i$ & $(2l+1)\cdot\delta_{l\operatorname{mod} 2,0}$\\
    Cyclic group & $C_n$ & $2\lfloor l/n\rfloor+1$\\
    Dihedral group & $D_n$ & $\lfloor l/n\rfloor + \delta_{l\operatorname{mod} 2,0}$\\
    Tetrahedral group & $T$ & $r = 6\ \ \qquad b = 100110$\\
    Octahedral group & $O$ & $r = 12\qquad b = 100010101110$\\
    Icosahedral group & $I$ & $r = 30\qquad b = 100000100010100110101110111110$\\
    \bottomrule
\end{tabular}
\end{table}

\begin{theorem}[\cref{theo:ploy_of_cos}]
For any geometric graph, there exists a bijection between the set of inner products $\{z_{ij}^{(l)}\}_{l=1}^{|\sA_{ij}|}$ given by~\cref{eq:ship2Legendre} and the set of edge angles $\sA_{ij}=\{\theta_{is,jt}\coloneqq\langle\vec\vx_{is}, \vec\vx_{jt}\rangle\}_{s\in\mathcal{N}(i),t\in\mathcal{N}(j)}$.
\end{theorem}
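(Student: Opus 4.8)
The plan is to show that the tuple $(z_{ij}^{(l)})_{l=1}^{M}$, where $M\coloneqq|\sA_{ij}|=|\mathcal{N}(i)|\cdot|\mathcal{N}(j)|$, and the multiset of cosines $\{c_{st}\coloneqq\langle\vec\vx_{is},\vec\vx_{jt}\rangle\}_{s\in\mathcal{N}(i),\,t\in\mathcal{N}(j)}$ determine each other; the final correspondence with $\sA_{ij}$ then follows because $\arccos:[-1,1]\to[0,\pi]$ is a bijection applied entrywise. The forward map (angles $\mapsto$ inner products) is immediate from~\cref{eq:ship2Legendre}, so the real content is injectivity: reconstructing the $M$ cosines from the $M$ scalars $z_{ij}^{(l)}$.

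First I would strip the known prefactor. Since $z_{ij}^{(l)}=\frac{4\pi}{2l+1}\sum_{s,t}P^{(l)}(c_{st})$ and $\frac{4\pi}{2l+1}\neq0$, knowing $z_{ij}^{(l)}$ is equivalent to knowing the Legendre power sum $S_l\coloneqq\sum_{s,t}P^{(l)}(c_{st})$. Next I would pass from the Legendre basis to the ordinary power sums $p_k\coloneqq\sum_{s,t}c_{st}^{\,k}$. Because $\deg P^{(l)}=l$ with nonzero leading coefficient, writing $P^{(l)}=\sum_{k=0}^{l}a_{lk}x^k$ with $a_{ll}\neq0$ gives $S_l=a_{l0}p_0+\sum_{k=1}^{l}a_{lk}p_k$, where $p_0=M$ is fixed. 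Hence the linear map $(p_1,\dots,p_M)\mapsto(S_1,\dots,S_M)$ is lower triangular with nonzero diagonal, so it is invertible, and $(z_{ij}^{(l)})_{l=1}^{M}$ carries exactly the same information as $(p_1,\dots,p_M)$.

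Then I would invoke Newton's identities, which provide an invertible triangular correspondence between the power sums $(p_1,\dots,p_M)$ and the elementary symmetric polynomials $(e_1,\dots,e_M)$ of the size-$M$ multiset $\{c_{st}\}$. Finally, the $e_k$ are, up to sign, the coefficients of the monic polynomial $\prod_{s,t}(x-c_{st})=x^M-e_1x^{M-1}+\cdots+(-1)^M e_M$, whose root multiset is exactly $\{c_{st}\}$; thus the coefficients determine the multiset and conversely. Composing these bijective links and applying $\arccos$ entrywise yields the claimed bijection with $\sA_{ij}$.

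The main obstacle is to establish genuine invertibility at each link, not merely a well-defined forward map, and in particular to see that exactly the degrees $l=1,\dots,M$ are both sufficient and necessary: Newton's identities require all of $p_1,\dots,p_M$ to pin down a degree-$M$ root multiset, while the triangularity of both the Legendre-to-monomial change of basis and Newton's recursion guarantees that each successive degree contributes one genuinely new scalar. A secondary point I would state explicitly is that $\sA_{ij}$ must be read as a multiset, since repeated edge angles are permitted and the root-coefficient correspondence counts roots with multiplicity.
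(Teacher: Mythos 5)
Your proposal is correct and follows essentially the same route as the paper's proof: passing from the Legendre sums to ordinary power sums via the triangular change of basis, then to elementary symmetric polynomials via Newton's identities, reconstructing the multiset of cosines as roots of the monic polynomial given by Vieta's formulas, and finally using that $\arccos$ is injective on $[-1,1]$. Your write-up is in fact somewhat more careful than the paper's (explicitly verifying invertibility of each triangular link and noting that $\sA_{ij}$ must be read as a multiset), but it introduces no genuinely different idea.
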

\begin{proof}
Note that the Legendre polynomial is a set of orthogonal polynomial bases, and there is a bijection to the power function polynomial space, that is
\begin{equation*}
    \operatorname{span}\left\{\sum_{n=1}^{|\sA_{ij}|}P^{(l)}\left(\cos\theta_n\right)\right\}_{l=0}^{M}=\operatorname{span}\left\{\sum_{n=1}^{|\sA_{ij}|}\cos^{\alpha}\theta_n\right\}_{\alpha=0}^{M},
\end{equation*}
where $M$ is any non-negative integer represents the degree of the polynomial space. Moreover, from the knowledge of Newton's identities, the space of power sums can be converted to space of elementary symmetric polynomials as
\begin{equation*}
    \operatorname{span}\left\{\sum_{n=1}^{|\sA_{ij}|}\cos^{\alpha}\theta_n\right\}_{\alpha=0}^{M}=\operatorname{span}\left\{\sum_{1\leq n_1<n_2<\dots n_m\leq |A_{ij}|}\left(\prod_{\nu=1}^{k}\cos\theta_{n_k}\right)\right\}_{\alpha=0}^{M}
\end{equation*}
From Vieta's formulas, when $M=|\sA_{ij}|$, with the $M+1$ polynomial in the space of elementary symmetric polynomials being coefficients, we can build a $|\sA_{ij}|$-degree polynomial with $\{\cos\theta\mid \theta\in\sA_{ij}\}$ as its all roots\footnote{The trick comes from Lemma 6 in DeepSet~\cite{zaheer2017deep}.}. Since all angles are in $[0,\pi)$, the cosine uniquely determines the angle value, and the proposition is established.
\end{proof}

\subsection{Further Discussion}
\label{sec:app_further}
Our theory in fact shows that the degeneration of global features of a certain degree (in \cref{tab:fail ell}) are \emph{inevitable} on symmetric geometric graphs. This raises two points worth discussing:
\begin{enumerate}
    \item The degree not indicated to degenerate not necessarily produce a non-zero representation, which may still be affected by the model form and the edge situation.
    \item There are some tricks to get around this degeneration: for example, making the output a set or relaxing equivariance constraints (\emph{e.g.} probabilistic symmetry breaking~\cite{lawrence2024improving}).
\end{enumerate}
It is worth mentioning that outputting a set can solve most problems, although such operators may be quite intractable to implement in computer systems. The failure cases of frame averaging~\cite{puny2021frame,duval2023faenet} and Neural P$^3$M~\cite{wang2024neural} which depend on singular value decomposition or eigenvalue decomposition, is caused by the non-unique matrix decomposition. Some other examples include ComENet~\cite{wang2022comenet}, which uses the \texttt{scatter\_min()} operator in PyTorch to extract the nearest neighbors, making it intractable to handle the situation where multiple neighbors are simultaneously closest, which is quite common in chemical molecules (\emph{e.g.} -CH$_{3}$, -NH$_2$).

Moreover, from \cref{theo:ploy_of_cos}, we show the expressivity of our HEGNN, which is able to recover the information of all angles between each pair of edge. However, it should be noted that $|\sA_{ij}|$ may be an extremely large number, which is unacceptable in practical applications to achieve completeness. The same problem also arises in discussions based on CG tensor product models (\emph{e.g.} TFN~\cite{thomas2018tensor}), such as discussions based on $D$-spanning~\cite{dym2020universality}, because a sufficiently high-degree $D$ is unacceptable. However, in terms of actual results, the performance of both our HEGNN and TFN is remarkable. From this perspective, how to bridge the gap between completeness and actual performance with features of limited channels and degrees is a question worth considering.

To get the ball rolling, we raise an interesting question here. Is there a performance gap between this type of purely mathematical representation and other features based on physical and biochemical prior knowledge?
\begin{itemize}
    \item Purely mathematical representation: topological characteristics~\cite{xia2014persistent,eijkelboom2023n,battiloro2024n}, cluster expansion basis~\cite{drautz2019atomic,dusson2022atomic}, subgraph blocks~\cite{kong2022molecule,kong2023generalist,kong2024full}, frames~\cite{kofinas2021roto,puny2021frame,duval2023faenet}, normalization operators~\cite{liaoequiformerv2, towards24meng};
    \item Features based on physical and biochemical prior knowledge: distance matrix~\cite{li2024distance,zhou2023uni,luo2024bridging,zhou2024smolsearch,wang2023mperformer,wang2024mmpolymer}, chemical bond length, angle and dihedral angle~\cite{Klicpera2020Directional,wang2022comenet,yue2024plug,zhang2024equipocket}, force~\cite{jiao2023energy,yu2024force}, fractional coordinates~\cite{li2024powder,jiao2024crystal,jiao2024structure}, canonical ordering~\cite{kong2022conditional,kong2023end}.
\end{itemize}
In fact, some of these features are directly related (\emph{e.g.} frames and fractional coordinates). How to construct effective and interpretable pure mathematical features based on those with prior knowledge will become a key point in network design, and we will consider further exploration in future work.

\section{More Experimental Details and Results}\label{sec:appx_exp}

\subsection{Comparison of parameters between models}

Like EGNN~\cite{satorras2021en}, different features use different numbers of channels, so the inference time does not obviously reflect the time complexity of $\mathcal{O}(L^2)$. We list the details of our HEGNN of different degree in \cref{tab:channel_for_steerable_features}. Intuitively, we add one of each steerable feature on the basis of EGNN (using 64 invariant scalars and 2 Cartesian vectors, \emph{i.e.} coordinate and velocity).

\begin{table}[H]
\centering
\caption{Channels for steerable features of different degrees and total dimensions of HEGNN of different degrees.}
\label{tab:channel_for_steerable_features}
\begin{tabular}{
    m{0.25\textwidth}<{\raggedright}
    m{0.15\textwidth}<{\centering}
    m{0.15\textwidth}<{\centering}
    m{0.15\textwidth}<{\centering}
    m{0.15\textwidth}<{\centering}
}
    \toprule
     & HEGNN$_{l\leq 1}$ & HEGNN$_{l\leq 2}$ & HEGNN$_{l\leq 3}$ & HEGNN$_{l\leq 6}$\\
    \midrule
    Channel for $\tilde{\vv}^{(0)}$ & 65 & 65 & 65 & 65\\
    Channel for $\tilde{\vv}^{(1)}$ & 3 & 3 & 3 & 3\\
    Channel for $\tilde{\vv}^{(2)}$ & -- & 1 & 1 & 1\\
    Channel for $\tilde{\vv}^{(3)}$ & -- & -- & 1 & 1\\
    Channel for $\tilde{\vv}^{(4)}$ & -- & -- & -- & 1\\
    Channel for $\tilde{\vv}^{(5)}$ & -- & -- & -- & 1\\
    Channel for $\tilde{\vv}^{(6)}$ & -- & -- & -- & 1\\
    \midrule
    Total dimensions & 74 & 79 & 86 & 119\\
    \bottomrule
\end{tabular}
\end{table}

\cref{tab:parameters_and_inference_time} shows the number of parameters and inference time of different models.

\begin{table}[H]
\centering
\caption{Parameters and inference time (on $100$-body dataset) of EGNN and other high-degree models.}
\label{tab:parameters_and_inference_time}
\begin{tabular}{
    m{0.25\textwidth}<{\raggedright}
    m{0.15\textwidth}<{\centering}
    m{0.15\textwidth}<{\centering}
    m{0.15\textwidth}<{\centering}
    m{0.15\textwidth}<{\centering}
}
    \toprule
    \multicolumn{5}{c}{Parameters}\\
    \midrule
      & $l\leq 1$ & $l\leq 2$ & $l\leq 3$ & $l\leq 6$\\
    \midrule
    EGNN & 134.1k & -- & -- & --\\
    HEGNN & 160.3k & 160.9k & 161.5k & 163.2k\\
    TFN & 3.7M & 9.6M & 19.5M & 86.6M\\
    SEGNN & 228.1k & 244.1k & 254.9k & 288.1k\\
    MACE & 14.8M & 38.0M & 77.0M & 342.5M\\
    \midrule
    \multicolumn{5}{c}{Inference Time ($10^{-2}$s)}\\
    \midrule
     & $l\leq 1$ & $l\leq 2$ & $l\leq 3$ & $l\leq 6$\\
    EGNN & 0.57 & -- & -- & --\\
    HEGNN & 0.82 & 0.88 & 0.91 & 1.08\\
    TFN & -- & 3.75 & 2.66 & OOM\\
    SEGNN & 1.33 & 1.7 & 1.98 & 22.33\\
    MACE & 22.10 & 125.87 & 261.11 & OOM \\
    \bottomrule
\end{tabular}
\end{table}

\subsection{Experiment on \texorpdfstring{$k$}{k}-fold Structure.}\label{sec:appx_kfold}
The results on the $k$-fold structure are completely consistent with the conclusion of \cref{tab:fail ell}. The few results that cannot reach 100\% are due to the small number of training epochs, so the classifier fails to perform perfect classification. In fact, the accuracy of models can achieve {100.00\tiny{± 0.0}} after increasing the number of training rounds of the model like 500 epochs. Since from 2nd-degree steerable features can distinguish $\gG_0$ and $\gG_1$, and HEGNN/TFN/MACE$_{l\leq L}$ contain 2nd-degree steerable features when $l\geq 3$, the extra results are hidden but all their values are {100.00\tiny{± 0.0}}.
\renewcommand{\able}{\cellcolor{green!10} \textbf{100.0 {\scriptsize ± 0.0}}}
\renewcommand{\unable}{50.0 {\scriptsize ± 0.0}}
\renewcommand{\cellgre}{\cellcolor{green!10}}
\renewcommand{\cellyel}{\cellcolor{yellow!20}}
\renewcommand{\maybeable}[2]{\cellcolor{yellow!20} #1 {\scriptsize ± #2}}

\begin{table}[ht]
\centering
\caption{\emph{$k$-fold symmetric structures.} }
\label{tab:k-fold-3D}
\begin{tabular}{clcccc}
    \toprule
    & & \multicolumn{4}{c}{\textbf{Rotational symmetry}} \\
    & \textbf{GNN Layer} & 2 fold & 3 fold & 5 fold & 10 fold \\
    \midrule
    \multirow{2}{*}{\rotatebox[origin=c]{90}{Cart.}}
    & E-GNN$_{l=1}$ & \unable & \unable & \unable & \unable \\
    & GVP-GNN$_{l=1}$ & \unable & \unable & \unable & \unable \\
    \midrule
    \multirow{11}{*}{\rotatebox[origin=c]{90}{Single Type Spherical}}
    & HEGNN$_{l=1}$  &  \unable & \unable & \unable & \unable \\
    & HEGNN$_{l=2}$  &  \cellyel 95.0 {\scriptsize ±15.0} & \cellyel 95.0 {\scriptsize ±15.0} & \cellyel 95.0 {\scriptsize ±15.0} & \cellyel 95.0 {\scriptsize ±15.0} \\
    & HEGNN$_{l=3}$  &  \unable & \cellyel 75.0 {\scriptsize ±40.31} & \unable & \unable \\
    & HEGNN$_{l=4}$  &  \able   & \cellyel 95.0 {\scriptsize ±15.0}  & \cellyel 95.0 {\scriptsize ±15.0} & \cellyel 95.0 {\scriptsize ±15.0} \\
    & HEGNN$_{l=5}$  &  \unable & \able   & \able   & \unable \\
    & HEGNN$_{l=6}$  &  \able   & \cellyel 95.0 {\scriptsize ±15.0} & \able & \cellyel 95.0 {\scriptsize ±15.0} \\
    & HEGNN$_{l=7}$  &  \unable & \cellyel 95.0 {\scriptsize ±15.0} & \cellyel 90.0 {\scriptsize ±30.0} & \unable \\
    & HEGNN$_{l=8}$  &  \able   & \cellyel 90.0 {\scriptsize ±30.0} & \cellyel 85.0 {\scriptsize ±32.02}& \cellyel 85.0 {\scriptsize ±32.02}\\
    & HEGNN$_{l=9}$  &  \unable & \able   & \able   & \unable \\
    & HEGNN$_{l=10}$ &  \able   & \able   & \able   & \able   \\
    & HEGNN$_{l=11}$ &  \unable & \able   & \cellyel 95.0 {\scriptsize ±15.0} & \unable \\
    \midrule
    \multirow{2}{*}{\rotatebox[origin=c]{90}{Sph.}}
    & HEGNN/TFN/MACE$_{l\leq 1}$ & \unable & \unable & \unable & \unable \\
    & HEGNN/TFN/MACE$_{l\leq 2}$ & \able & \able & \able & \able \\
    \bottomrule
\end{tabular}
\end{table}

\subsection{Further Verification on Regular Polyhedra}
\label{sec:extra_polyhedra}

\textbf{Implementation details:} 
Since the \texttt{e3nn}~\citep{geiger2022e3nn} library only implements spherical harmonics up to the $11$th-degree, we verified the expressivity of our HEGNN in disguise. In this experiment, we measure the expressivity of our HEGNN by calculating whether the high-degree feature $\tilde\vv_i^{(l)}$ updates on regular polyhedra. Namely, if $\Delta\tilde\vv_i^{(L)} = 0$, then HEGNN$_{l=L}$ loses the expressivity. This judgment is necessary and sufficient. We calculated the sum of all degree vectors from $L=1$ to $30$, which covers the maximum repeat length $r$ in \cref{tab:trace}. We implemented the calculation of the high-degree features based on \texttt{scipy}~\citep{2020SciPy-NMeth} library. 

\textbf{Results:} Our calculation results are shown in \cref{tab:sph_sum} which is completely consistent with the theoretical results in \cref{tab:trace}. The simple experiment shows that the sum of spherical harmonics $\textstyle\sum_{i}Y^{(l)}(\vec\vx_{i}-\vec\vx_c)$, as a function on graph, will actually vanish on regular polyhedra for some integer degree $l$.

\newcommand{\myTrue}{True}
\newcommand{\myFalse}{False}

\begin{center}
\begin{longtable}{
m{0.05\textwidth}<{\centering}
m{0.15\textwidth}<{\centering}
m{0.15\textwidth}<{\centering}
m{0.15\textwidth}<{\centering}
m{0.15\textwidth}<{\centering}
m{0.15\textwidth}<{\centering}
}
\caption{\emph{Expressivity analysis of HEGNN using sums of spherical harmonics.} Here, "True" indicates that our HEGNN can distinguish the orientations, meaning the norm of the sum of spherical harmonics is greater than 1. "False" in the table means that no distinction can be made, with the norm of the corresponding spherical harmonic being less than $10^{-3}$.}\label{tab:sph_sum}\\
\toprule
& \multicolumn{5}{c}{\textbf{Rotational symmetry}} \\
$L$ & Tetrahedron & Cube & Octahedron & Dodecahedron & Icosahedron \\
\midrule
\endfirsthead
\toprule
& \multicolumn{5}{c}{\textbf{Rotational symmetry}} \\
$L$ & Tetrahedron & Cube & Octahedron & Dodecahedron & Icosahedron \\
\midrule
\endhead
\bottomrule
\endfoot
1  & \myFalse & \myFalse & \myFalse & \myFalse & \myFalse \\
2  & \myFalse & \myFalse & \myFalse & \myFalse & \myFalse \\
3  & \myTrue  & \myFalse & \myFalse & \myFalse & \myFalse \\
4  & \myTrue  & \myTrue  & \myTrue  & \myFalse & \myFalse \\
5  & \myFalse & \myFalse & \myFalse & \myFalse & \myFalse \\
6  & \myTrue  & \myTrue  & \myTrue  & \myTrue  & \myTrue  \\
7  & \myTrue  & \myFalse & \myFalse & \myFalse & \myFalse \\
8  & \myTrue  & \myTrue  & \myTrue  & \myFalse & \myFalse \\
9  & \myTrue  & \myFalse & \myFalse & \myFalse & \myFalse \\
10 & \myTrue  & \myTrue  & \myTrue  & \myTrue  & \myTrue  \\
11 & \myTrue  & \myFalse & \myFalse & \myFalse & \myFalse \\
12 & \myTrue  & \myTrue  & \myTrue  & \myTrue  & \myTrue  \\
13 & \myTrue  & \myFalse & \myFalse & \myFalse & \myFalse \\
14 & \myTrue  & \myTrue  & \myTrue  & \myFalse & \myFalse \\
15 & \myTrue  & \myFalse & \myFalse & \myFalse & \myFalse \\
16 & \myTrue  & \myTrue  & \myTrue  & \myTrue  & \myTrue  \\
17 & \myTrue  & \myFalse & \myFalse & \myFalse & \myFalse \\
18 & \myTrue  & \myTrue  & \myTrue  & \myTrue  & \myTrue  \\
19 & \myTrue  & \myFalse & \myFalse & \myFalse & \myFalse \\
20 & \myTrue  & \myTrue  & \myTrue  & \myTrue  & \myTrue  \\
21 & \myTrue  & \myFalse & \myFalse & \myFalse & \myFalse \\
22 & \myTrue  & \myTrue  & \myTrue  & \myTrue  & \myTrue  \\
23 & \myTrue  & \myFalse & \myFalse & \myFalse & \myFalse \\
24 & \myTrue  & \myTrue  & \myTrue  & \myTrue  & \myTrue  \\
25 & \myTrue  & \myFalse & \myFalse & \myFalse & \myFalse \\
26 & \myTrue  & \myTrue  & \myTrue  & \myTrue  & \myTrue  \\
27 & \myTrue  & \myFalse & \myFalse & \myFalse & \myFalse \\
28 & \myTrue  & \myTrue  & \myTrue  & \myTrue  & \myTrue  \\
29 & \myTrue  & \myFalse & \myFalse & \myFalse & \myFalse \\
30 & \myTrue  & \myTrue  & \myTrue  & \myTrue  & \myTrue  \\
\end{longtable} 
\end{center}

\subsection{Results on other dataset settings}

The $N$-body dataset setting of this paper refers to FastEGNN~\cite{zhang2024improving}\footnote{\url{https://github.com/GLAD-RUC/FastEGNN}.}, that is, 5,000 samples are used as the training set (instead of 3,000). We tested the situation of using different data segmentation in \cref{tab:full_nbody}. We also add ClofNet~\cite{du2022se}, a local frame based scalarization method and SEGNN~\cite{brandstetter2021geometric} (select $l=1$ according to the optimal situation in the paper) and MACE~\cite{batatia2022mace} ($l=2$), two classic high-degree steerable models for comparison.

\begin{center}
\begin{longtable}
{
m{0.25\textwidth}<{\raggedright}
m{0.15\textwidth}<{\centering}
m{0.15\textwidth}<{\centering}
m{0.15\textwidth}<{\centering}
m{0.15\textwidth}<{\centering}
}
\caption{Results of $N$-body dataset under two partitions.}\label{tab:full_nbody}\\
\toprule
$N$-body($\times 10^{-2}$) & 5-body & 20-body & 50-body & 100-body \\
\midrule\endfirsthead
\toprule
$N$-body($\times 10^{-2}$) & 5-body & 20-body & 50-body & 100-body \\
\midrule
\endhead
\multicolumn{5}{c}{train/valid/test=3k/2k/2k} \\
\midrule
EGNN & 0.71 & 1.08 & 1.16 & 1.29 \\
ClofNet & 0.89 & 1.79 & 2.40 & 2.94 \\
ClofNet-vel & 0.84 & 1.50 & 2.28 & 2.67 \\
GMN & 0.67 & 1.21 & 1.18 & 2.55 \\
MACE & 1.43 & 1.93 & 2.20 & 2.51 \\
SEGNN & 1.81 & 2.67 & 3.44 & NaN \\
HEGNN$_{l \leq 1}$ & 0.64 & \textbf{0.84} & 0.92 & 1.04 \\
HEGNN$_{l \leq 2}$ & 0.69 & 0.89 & 1.13 & \textbf{0.94} \\
HEGNN$_{l \leq 3}$ & \textbf{0.58} & 1.04 & \textbf{0.92} & 1.04 \\
HEGNN$_{l \leq 6}$ & 0.77 & 1.06 & 1.02 & 1.18 \\
\midrule%
\multicolumn{5}{c}{train/valid/test=5k/2k/2k} \\
\midrule
ClofNet & 0.80 & 1.49 & 2.28 & 2.77 \\
ClofNet-vel & 0.78 & 1.45 & 2.22 & 2.77 \\
GMN & 0.52 & 0.98 & 1.04 & 1.21 \\
MACE & 1.13 & 1.60 & 2.41 & 3.38 \\
SEGNN & 1.68 & 2.63 & 3.30 & NaN \\
HEGNN$_{l \leq 1}$ & 0.52 & 0.79 & 0.88 & 1.13 \\
HEGNN$_{l \leq 2}$ & \textbf{0.47} & \textbf{0.78} & 0.90 & 0.97 \\
HEGNN$_{l \leq 3}$ & 0.48 & 0.80 & \textbf{0.84} & 0.94 \\
HEGNN$_{l \leq 6}$ & 0.69 & 0.86 & 0.96 & \textbf{0.86} \\
\bottomrule
\end{longtable}
\vspace{-0.1in}
\end{center}

Note that SEGNN in \cref{tab:full_nbody} does not show the effect in the original paper. We also tried to reproduce the dataset in the original paper~\cite{brandstetter2021geometric}, and the effect of SEGNN on 5-body dataset can be reproduced. However, see \cref{tab:nbody_in_segnn}, it is also shown that SEGNN performs poorly on larger datasets.

\begin{table}[H]
    \centering
    \caption{Comparison between EGNN, SEGNN and HEGNN on $N$-body from \cite{brandstetter2021geometric}}
    \label{tab:nbody_in_segnn}
    \begin{tabular}{m{0.25\textwidth}<{\raggedright}
        m{0.15\textwidth}<{\centering}
        m{0.15\textwidth}<{\centering}
        m{0.15\textwidth}<{\centering}
        m{0.15\textwidth}<{\centering}
        }
        \toprule
        $N$-body($\times 10^{-2}$) & 5-body & 20-body & 50-body & 100-body \\
        \midrule
        EGNN & 0.71 & 1.04 & 1.15 & 1.31 \\
        SEGNN & \textbf{0.50} & 6.61 & 9.34 & 13.46 \\
        HEGNN$_{l \leq 1}$ & 0.71 & 0.97 & \textbf{0.93} & 1.22 \\
        HEGNN$_{l \leq 2}$ & 0.65 & \textbf{0.91} & 1.05 & 1.14 \\
        HEGNN$_{l \leq 3}$ & 0.63 & 0.99 & 1.05 & 1.27 \\
        HEGNN$_{l \leq 6}$ & 0.72 & 1.05 & 1.11 & 1.28 \\
        \bottomrule
    \end{tabular}
    \vspace{-0.1in}
\end{table}

We found that the GMN-L method proposed in \cite{han2022equivariant} generally performs the best on MD17 dataset, but our HEGNN-6 also achieves comparable performance to GMN-L in most cases. Given that GMN-L requires careful handcrafting of constraints for chemical bonds into the model design, our model's ability to derive promising results without such enhancements supports its competitive performance.

\begin{table}[H]
\vspace{-0.1in}
\centering
\setlength{\tabcolsep}{2pt}
\small
\caption{Prediction error ($\times 10^{-2}$) on MD17 dataset. Results averaged across 3 runs. }
\resizebox{\textwidth}{!}{
\begin{tabular}{lcccccccc}
\toprule
      & Aspirin & Benzene & Ethanol & Malonaldehyde & Naphthalene & Salicylic & Toluene & Uracil \\
\midrule
    RF    & 10.94\tiny{$\pm$0.01}   & 103.72\tiny{$\pm$1.29}   & 4.64\tiny{$\pm$0.01}   & 13.93\tiny{$\pm$0.03}   & 0.50\tiny{$\pm$0.01}   & 1.23\tiny{$\pm$0.01}   & 10.93\tiny{$\pm$0.04}   & 0.64\tiny{$\pm$0.01}   \\
    EGNN  & 14.41\tiny{$\pm$0.15}  & 62.40\tiny{$\pm$0.53}  & 4.64\tiny{$\pm$0.01}  & 13.64\tiny{$\pm$0.01}  & 0.47\tiny{$\pm$0.02}  & 1.02\tiny{$\pm$0.02}  & 11.78\tiny{$\pm$0.07}  & 0.64\tiny{$\pm$0.01}  \\
    EGNNReg & 13.82\tiny{$\pm$0.19}  & 61.68\tiny{$\pm$0.37}  & 6.06\tiny{$\pm$0.01}  & 13.49\tiny{$\pm$0.06}  & 0.63\tiny{$\pm$0.01}  & 1.68\tiny{$\pm$0.01}  & 11.05\tiny{$\pm$0.01}  & 0.66\tiny{$\pm$0.01}  \\
    GMN   & 10.14\tiny{$\pm$0.03}  & \textbf{48.12}\tiny{$\pm$0.40}  & 4.83\tiny{$\pm$0.01}  & 13.11\tiny{$\pm$0.03}  & 0.40\tiny{$\pm$0.01}  & 0.91\tiny{$\pm$0.01}  & \textbf{10.22}\tiny{$\pm$0.08}  & 0.59\tiny{$\pm$0.01}  \\
    GMN-L   & \textbf{9.76}\tiny{$\pm$0.11}  & \underline{54.17}\tiny{$\pm$0.69}  & {\underline{4.63}}\tiny{$\pm$0.01}  & \textbf{12.82}\tiny{$\pm$0.03}  & 0.41\tiny{$\pm$0.01}  & \textbf{0.88}\tiny{$\pm$0.01}  & \underline{10.45}\tiny{$\pm$0.04}  & 0.59\tiny{$\pm$0.01}  \\
\midrule
    TFN$_{l\leq2}$   & 12.37\tiny{$\pm$0.18}   & 58.48\tiny{$\pm$1.98}   & 4.81\tiny{$\pm$0.04}   & 13.62\tiny{$\pm$0.08}   & 0.49\tiny{$\pm$0.01}   & 1.03\tiny{$\pm$0.02}   & 10.89\tiny{$\pm$0.01}   & 0.84\tiny{$\pm$0.02}   \\
    SE(3)-Tr.$_{l\leq2}$ & 11.12\tiny{$\pm$0.06}   & 68.11\tiny{$\pm$0.67}   & 4.74\tiny{$\pm$0.13}   & 13.89\tiny{$\pm$0.02}   & 0.52\tiny{$\pm$0.01}   & 1.13\tiny{$\pm$0.02}   & 10.88\tiny{$\pm$0.06}   & 0.79\tiny{$\pm$0.02}  \\
\midrule
    HEGNN$_{l\leq1}$ & 10.32\tiny{$\pm$0.58} & 62.53\tiny{$\pm$7.62} & \underline{4.63}\tiny{$\pm$0.01} & \underline{12.85}\tiny{$\pm$0.01} & \underline{0.38}\tiny{$\pm$0.01} & \underline{0.90}\tiny{$\pm$0.05} & 10.56\tiny{$\pm$0.10} & 0.56\tiny{$\pm$0.02} \\
    HEGNN$_{l\leq2}$ & 10.04\tiny{$\pm$0.45} & 61.80\tiny{$\pm$5.92} & \underline{4.63}\tiny{$\pm$0.01} & \underline{12.85}\tiny{$\pm$0.01} & 0.39\tiny{$\pm$0.01} & 0.91\tiny{$\pm$0.06} & 10.56\tiny{$\pm$0.05} & 0.55\tiny{$\pm$0.01} \\
    HEGNN$_{l\leq3}$ & 10.20\tiny{$\pm$0.23} & 62.82\tiny{$\pm$4.25} & \underline{4.63}\tiny{$\pm$0.01} & \underline{12.85}\tiny{$\pm$0.02} & \textbf{0.37}\tiny{$\pm$0.01} & 0.94\tiny{$\pm$0.10} & 10.55\tiny{$\pm$0.16} & \textbf{0.52}\tiny{$\pm$0.01} \\
    HEGNN$_{l\leq6}$ &  \underline{9.94}\tiny{$\pm$0.07} & 59.93\tiny{$\pm$5.21} & \textbf{4.62}\tiny{$\pm$0.01} & \underline{12.85}\tiny{$\pm$0.01} & \textbf{0.37}\tiny{$\pm$0.02} & \textbf{0.88}\tiny{$\pm$0.02} & 10.56\tiny{$\pm$0.33} & \underline{0.54}\tiny{$\pm$0.01} \\
\bottomrule
\end{tabular}%
}
\label{tab:md17-full}%
\vspace{-0.1in}
\end{table}%

\subsection{Expressiveness of initialization layer}

Note that both EGNN~\cite{satorras2021en} and HEGNN$_{l\leq1}$ only use Cartesian vectors. However, in \cref{tab:nbody}, the effect of the latter is greatly improved. We speculate that there are two possible factors for this improvement: 1) the extra layer of message passing brought in by \cref{eq:init}; 2) the multi-channel of Cartesian vectors (see \cref{tab:channel_for_steerable_features}). Therefore, we tested the effect of HEGNN$_{l \leq 1}$-${3\text{layers}}$, and the results are shown in \cref{tab:layers-ablation}. From the improvement, the first factor contributes more.

\begin{table}[H]
\centering
\caption{Comparison between EGNN and HEGNN on $N$-body. }
\label{tab:layers-ablation}
\begin{tabular}{
    m{0.25\textwidth}<{\raggedright}
    m{0.15\textwidth}<{\centering}
    m{0.15\textwidth}<{\centering}
    m{0.15\textwidth}<{\centering}
    m{0.15\textwidth}<{\centering}
}
    \toprule
    $N$-body($\times 10^{-2}$) & 5-body & 20-body & 50-body & 100-body \\
    \midrule
    EGNN-${4\text{layers}}$ & 0.65 & 1.01 & 1.00 & 1.36 \\
    HEGNN$_{l \leq 1}$-${3\text{layers}}$ & 0.63 & 0.98 & 0.96 & 1.31 \\
    HEGNN$_{l \leq 1}$-${4\text{layers}}$ & 0.52 & 0.79 & 0.88 & 1.13 \\
    \bottomrule
\end{tabular}
\vspace{-0.1in}
\end{table}

\section{Limitation}
\label{sec:limitation}
Our current experiments are mainly limited to testing on small molecules and have not been verified on large-scale molecules or large-scale physical systems. Whether our HEGNN is effective on large-scale geometric graph data sets remains to be verified.

\section{Broader Impact}
\label{sec:broader_impact}
Our research belongs to the field of AI for Science. The HEGNN proposed in this article is expected to better model scientific problems, thereby promoting the development of higher-precision and efficient AI scientific models.

\end{document}